\renewcommand\bibentry[1]{\nocite{#1}{\frenchspacing\@nameuse{BR@r@#1\@extra@b@citeb}}}
\newtheoremstyle{theoremdd}
  {\topsep}
  {\topsep}
  {\itshape}
  {0pt}
  {\bfseries}
  {. }
  { }
  {\thmname{#1}\thmnumber{ #2}\textnormal{\thmnote{ (#3)}}}
\theoremstyle{theoremdd}
\DeclareMathOperator{\diag}{diag}
\newcommand\R{\mathbb{R}}
\newcommand\Cmp{\mathbb{C}}
\newcommand\E{\mathbb{E}}
\newcommand\Tstrut{\rule{0pt}{2.6ex}}       
\newcommand\Bstrut{\rule[-0.9ex]{0pt}{0pt}} 
\newcommand{\TBstrut}{\Tstrut\Bstrut} 
\theoremstyle{plain}
\theoremstyle{definition}
\theoremstyle{remark}
\title{Resurrecting Recurrent Neural Networks for Long Sequences}
\author[1,+]{Antonio Orvieto}
\author[2]{Samuel L Smith}
\author[2]{Albert Gu}
\author[2]{Anushan Fernando}
\author[2]{Caglar Gulcehre}
\author[2]{Razvan Pascanu}
\author[2]{Soham De}
\affil[1]{ETH Zurich}
\affil[2]{DeepMind}
\affil[+]{Work done at DeepMind.}
\begin{abstract}
\vspace{-3mm}
Recurrent Neural Networks (RNNs) offer fast inference on long sequences but are hard to optimize and slow to train. Deep state-space models (SSMs) have recently been shown to perform remarkably well on long sequence modeling tasks, and have the added benefits of fast parallelizable training and RNN-like fast inference. However, while SSMs are superficially similar to RNNs, there are important differences that make it unclear where their performance boost over RNNs comes from. 
In this paper, we show that careful design of deep RNNs using standard signal propagation arguments can recover the impressive performance of deep SSMs on long-range reasoning tasks, while also matching their training speed. 
To achieve this, we analyze and ablate a series of changes to standard RNNs including linearizing and diagonalizing the recurrence, using better parameterizations and initializations, and ensuring proper normalization of the forward pass. Our results provide new insights on the origins of the impressive performance of deep SSMs, while also introducing an RNN block called the Linear Recurrent Unit that matches both their performance on the \emph{Long Range Arena} benchmark and their computational efficiency.
\end{abstract}
\begin{document}

\maketitle

\section{Introduction}

Recurrent neural networks (RNNs) have played a central role since the early days of deep learning, and are a natural choice when modelling sequential data~\citep{mcculloch1943logical,hopfield1982neural,rumelhart1985learning,elman1990finding}. 
However, while these networks have strong theoretical properties, such as Turing completeness~\citep{kilian1996dynamic,chung2021turing}, it is well-known that they can be hard to train in practice. In particular, RNNs suffer from the vanishing and exploding gradient problem~\citep{hochreiter1991untersuchungen,bengio1994learning, pascanu2013difficulty}, which makes it difficult for these models to learn about the long-range dependencies in the data. Several techniques were developed that attempt to mitigate this issue, including orthogonal/unitary RNNs~\citep{arjovsky2016unitary, helfrich2018orthogonal}, and gating mechanisms such as long short-term memory (LSTM)~\citep{hochreiter1997long} and gated recurrent units (GRUs)~\citep{cho2014properties}.
Nonetheless, these models are still slow to optimize due to the inherently sequential nature of their computation \citep{kalchbrenner2016neural}, and are therefore hard to scale.

In recent years, Transformers \citep{vaswani2017attention}
have gained increasing prominence for sequence modelling tasks, achieving remarkable success in a wide range of applications \citep{brown2020language, dosovitskiy2020image, jumper2021highly}. Compared to RNNs, attention layers are easier to scale and parallelize during training, and crucially they do not suffer from the vanishing gradient problem, since the interaction between any two tokens in the sequence is modeled by direct edges in the network. A key issue with attention layers however is that their computational and memory costs scale quadratically as $O(L^2)$ with the sequence length $L$. Transformers can therefore be especially expensive to deploy on long sequences. RNNs, which scale linearly with the sequence length, are therefore typically faster than transformers at inference time even for modest sequence lengths \citep{liu2019mkd}.

Motivated by these problems,
\citet{gu2021efficiently} recently introduced the S4 model, a carefully designed deep state-space model~(SSM) achieving remarkable performance on tasks from the Long Range Arena (LRA) \citep{tay2020long}, a benchmark explicitly designed to require very long-ranged reasoning
. S4 is theoretically principled and inspired by continuous-time linear SSMs; well-established components of modern control systems. More importantly, the S4 layer and its variants~(DSS, S4D, S5, etc)~\citep{gupta2022diagonal,gu2022parameterization,smith2022simplified} overcome the $O(L^2)$ bottleneck of attention layers by modeling interactions between tokens using a hidden state~(like RNNs) 
under proper discretization techniques.
These models can be made very efficient at inference time by simply unrolling the layer like an RNN. Futhermore, since SSMs are linear in the temporal dimension, they are easily parallelizable during training, in contrast to the slow sequential nature of training a typical RNN. This makes them very computationally efficient on long sequences.

While the S4 model is equivalent to an RNN during inference, it has a number of unique characteristics during training. For example, S4 is parameterized as a discretization of a latent continuous-time system of differential equations. S4 also uses specific initializations of the state matrices motivated from the theory of polynomial projections \citep{gu2020hippo}. While these characteristics might seem to motivate the impressive performance of these models, later works \citep{gu2022parameterization,smith2022simplified,gupta2022diagonal,gupta2022simplifying} have suggested that the specific initialization used by S4 is often not crucial for performance, and that the discretization rules which achieve best performance  may deviate from theory \citep{smith2022simplified}. It is therefore unclear what these unique characteristics of the deep SSMs are doing mechanistically, and how they can be simplified.

\begin{figure}
\centering
    \includegraphics[width=0.99\textwidth]{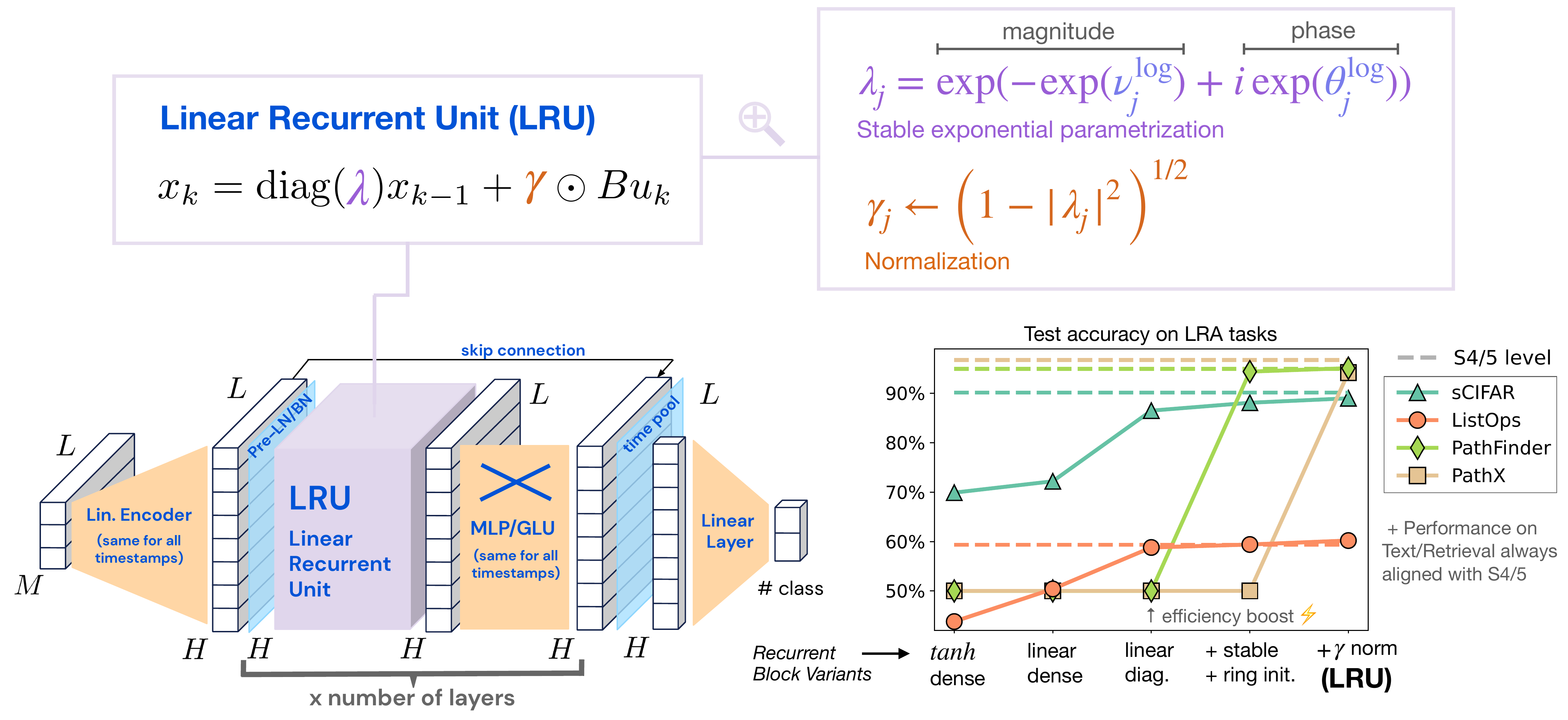}
    \caption{\textit{\textbf{(Left)} Deep Linear Recurrent Unit~(LRU) architecture introduced in this paper, inspired by S4~\citep{gu2021efficiently}. The model is a stack of LRU blocks, with nonlinear projections in between, and also uses skip connections and normalization methods like batch/layer normalization. We expand on the details in \S\ref{app:detailed_experimental_setup} and provide pseudocode in \S\ref{app:pseudocode}. We also use the same architecture structure~(Norm-Recurrence-GLU-Skip) for every variant of the recurrent module in our study~($\tanh$ dense, linear dense, etc..). \textbf{(Right)} Summary of effects for the main steps outlined in the introduction towards designing LRUs starting from $\tanh$ RNNs. Shown is the average performance~(3 seeds) of the recurrent module at each step on the Long Range Arena~(LRA), compared to average performance of deep SSMs.
    For all LRA tasks, we match the performance of deep SSMs like S4/S4D/S5 with LRUs. Detailed results in \S\ref{sec:how_to}.
    }}
    \label{fig:deep_rnn}
\end{figure}

Motivated by the striking similarities between RNNs and deep SSMs, and in an attempt to better understand the underlying mechanism driving the performance of these models, we study the power and limitations of RNNs when used as core components of deep architectures for long-range reasoning. Our main goal is to answer the question:
\vspace{-2mm}
\begin{center}
    ``\textit{Can we match the performance and efficiency of deep continuous-time SSMs using deep RNNs?}''
\end{center}
\vspace{-1mm}
We give a \textit{positive answer} to this question. We show that the performance boost provided by deep SSMs like S4 can also be achieved via a series of small changes to a vanilla deep RNN. With these changes, we can recover the performance and efficiency of these deep SSMs on the Long Range Arena (LRA) benchmark \citep{tay2020long}. We call this new RNN model the Linear Recurrent Unit (or LRU for short).

 \vspace{-2mm}
\paragraph{Main Steps.}  We outline here the main steps needed towards crafting performant and efficient RNN models. 
Note while some of these observations have been made in prior works (see \S\ref{sec:related}),
we provide novel perspectives and careful ablations leading to new insights. Each step presented in this paper unveils a specific property of recurrent networks, and showcases the challenges and best practices in training and initializing deep RNNs. 

\begin{itemize}[leftmargin=4mm, itemsep=2mm]
\vspace{-2mm}
\item \textbf{Linear Recurrences.} When replacing SSM layers in a deep architecture with vanilla RNN layers using tanh or ReLU activations,
the performance on Long Range Arena (LRA) drops significantly. Surprisingly, in \S\ref{sec:linear_rnns} we find that simply \textit{removing the nonlinearities in the recurrence of the RNN}~(i.e., using linear recurrences) gives a substantial boost in test accuracy. We motivate this effect in~\S\ref{app:expressivity} by showing that stacking linear RNN layers and nonlinear MLP blocks (Fig.\ref{fig:deep_rnn}) can indeed model complex nonlinear sequence-to-sequence maps without the need for nonlinearities in the recurrence. While dropping the nonlinearity does not seem to harm expressivity, it leads to several advantages, from the ability to directly control how quickly the gradients might vanish or explode, to allowing us to parallelize training. Our findings also partially motivate the success of deep SSMs, where the recurrence is also linear.

\item \textbf{Complex Diagonal Recurrent Matrices.} 
Dense linear RNN layers can be reparameterized to a complex diagonal form without affecting the expressivity of the network or the features at initialization (\S\ref{sec:diagonalization}). Diagonal linear RNN layers additionally allow for a \textit{highly parallelizable unrolling} of the recurrence using parallel scans to substantially improve training speeds~\citep{martin2017parallelizing}. 
We validate that these observations, which have been leveraged by prior SSMs \citep{gupta2022diagonal,smith2022simplified},
also provide important efficiency improvements for linear RNN layers.
\looseness=-1

\item \textbf{Stable Exponential Parameterization.} In \S\ref{sec:exponential} we show that using an exponential parameterization for the diagonal recurrent matrix has important benefits. Crucially, this enables us to easily enforce stability during training, which in turn allows us to modify the initialization distribution to facilitate long-range reasoning and improve performance. Our results indicate that rather than the specific deterministic initializations used by several recent SSMs, it is the eigenvalue distribution of the recurrent layer at initialization that determines if the model can capture long-range reasoning.

\item \textbf{Normalization.} In \S\ref{sec:norm_pathx} we show that normalizing the hidden activations on the forward pass is important when learning tasks with very long-range dependencies. With this final modification, our RNNs can match the performance of deep SSMs on all tasks in the LRA benchmark. 
Connecting back to state-space models, we show in \S\ref{sec:explaining_s4} how our normalization can be linked to the discretization structure in S4.
\end{itemize}

\vspace{-2mm}
We summarize the deep Linear Recurrent Unit (LRU) architecture used in this paper, and the effect of each of the above steps on performance in Fig.\ref{fig:deep_rnn}. 
We emphasize that the main purpose of our work is not to surpass the performance of S4-based models, but rather to demonstrate that simple RNNs can also achieve strong performance on long range reasoning tasks when properly initialized and parameterized.
We believe the insights derived in this paper can be useful to design future architectures, and to simplify existing ones.

\section{Preliminaries}
\label{sec:preliminaries}
\vspace{-1mm}
In this section, we compare the key architectural components (RNNs and SSMs) studied in this work, and also describe our methodology and experimental setup. For a more thorough discussion or related architectures, the reader can check our related work section~\S\ref{sec:related}.
\vspace{-2mm}
\subsection{Recap of recurrent block structures}
\label{sec:recap}
\vspace{-1mm}
We give an overview of the main architectural components considered in this paper, focusing on the major difference between Vanilla RNNs and recent S4-like deep SSMs~\citep{gu2021efficiently,gu2022parameterization, gupta2022diagonal, smith2022simplified}.
\vspace{-3mm}
\paragraph{RNN Layer.}Let $(u_1, u_2,\dots, u_L)$ be a sequence of $H_{\text{in}}$-dimensional inputs, which can be thought of as either the result of intermediate layer computations~(which keep the sequential structure) or as the initial input. An RNN layer with $N$-dimensional hidden state computes a sequence of $H_{\text{out}}$-dimensional outputs $(y_1, y_2,\dots, y_L)$ through a recurrent computation\footnote{We do not use bias parameters as they can be incorporated into the MLP blocks preceding and following the RNN block. Classical RNNs also included a nonlinearity on the output $y_k = \sigma_{\text{out}}(C x_k + b)$ with $D=0$. Having $D\ne 0$ basically introduces a skip connection~(standard in modern architectures), and the $\sigma_{\text{out}}$ can be thought of as part of the MLP following the RNN.}
using learnable parameters $A\in\R^{N\times N}, B\in\R^{N\times H_{\text{in}}}, C\in\R^{H_{\text{out}}\times N}, D\in\R^{H_{\text{out}}\times H_{\text{in}}}$:
\begin{align}
    &x_{k} = \sigma(Ax_{k-1}+ Bu_k),\quad y_k = C x_k + D u_k, 
    \label{eq:RNN}
\end{align}
starting from $x_0 = 0\in\R^N$. $\sigma$ here denotes a nonlinearity, often chosen to be a $\tanh$ or sigmoid activation. If $\sigma$ is the identity function, then we say the RNN layer is \textit{linear}. 
\vspace{-3mm}
\paragraph{S4-like recurrent layer.} We present a simplified\footnote{This version is most similar to S5~\citep{smith2022simplified}, but is here presented for ease of reasoning for a single discretization parameter $\Delta$, shared across input dimensions. For more details, see~\S\ref{sec:related}.
} version of the S4 recurrence introduced in~\citet{gu2021efficiently}. The input $(u_0, u_1,\dots, u_{L-1})$ is now seen as the result of sampling a latent continuous-time signal $ u_{\text{ct}}:\R_{\ge0}\to\R^{H_{\text{in}}}$ at multiples of a stepsize $\Delta>0$: i.e. $ u_{\text{ct}}(\Delta k) := u_k$ for all $k\in 0,\dots, L-1$. The output sequence $(y_0, y_1,\dots, y_{L-1})$ is then sampled, again with stepsize $\Delta$, from the signal $y_{\text{ct}}:\R_{\ge 0}\to \R^{H_{\text{out}}}$ computed by the following continuous-time state-space model, initialized at $x_{\text{ct}}(0) = 0$:
\vspace{-1mm}
\begin{align}
    &\frac{d}{dt}x_{\text{ct}}(t) = \tilde A x_{\text{ct}}(t) + \tilde B u_{\text{ct}}(t),\nonumber\\
    &y_{\text{ct}}(t) =  \Re\left[\tilde C  x_{\text{ct}}(t)\right] + \tilde D u_{\text{ct}}(t),
    \label{eq:S4}
\end{align}
where $\Re(p)$ denotes the real part of a complex-valued vector $p$, $\tilde A = \text{diag}(\tilde a)$ with $\tilde a\in\Cmp^N$, $\tilde B\in\Cmp^{N\times H_{\text{in}}}, \tilde C\in\Cmp^{H_{\text{out}}\times N}$ and $\tilde D\in\R^{H_{\text{out}}\times H_{\text{in}}}$. Ignoring the continuous-time nature of this model, the most striking differences compared to Eq.\eqref{eq:RNN} are that (a) the computation on the right-hand-side is \textit{linear} in the hidden state and in the input, and (b) most parameters are \textit{complex} valued, with $\tilde A$ being diagonal. While $\tilde B,\tilde C,\tilde D$ follow complex random or uniform initialization, the transition matrix $\tilde A$ is \textit{structured}, i.e., initialized \textit{deterministically} through HiPPO theory~\citep{gu2020hippo} in diagonal form. Common choices~\citep{gu2022parameterization} are $\tilde a_n = -\frac{1}{2}+ i\pi n$ (S4D-Lin) and 
$\tilde a_n = -\frac{1}{2}+ i\frac{N}{\pi}\left(\frac{N}{n+1}-1\right)$~(S4D-Inv), for $n = 1,2,\dots, N$. 

For training and inference, the continuous-time system in Eq.\eqref{eq:S4} is discretized at stepsize $\Delta$ through a high-accuracy Zero-Order-Hold~(ZOH) or Bilinear method. The ZOH method gives 
\begin{align}
    &x_{k} = Ax_{k-1}+ Bu_k, \quad y_k = C x_k + D u_k, 
    \label{eq:S4-disc}
\end{align}
where $x_{-1}=0$, $A = \exp(\Delta \tilde A)$, $B = (A-I)\tilde A^{-1}\tilde B$, $ C = \tilde C$ and $ D = \tilde D$, and $\exp$ denotes the matrix exponential. Under the assumption that $u_{\text{ct}}$ is constant in between timestamps~(which can be thought of as a modeling assumption), this numerical integration is \textit{exact}~\citep{jacquot2019modern}. Moreover, note that all these discretization operations can be quickly performed element-wise since $\tilde A$ is diagonal.

\vspace{-3mm}
\paragraph{Some key differences.} It is worth pointing out a few structural and computational properties, to highlight some crucial differences between RNNs and SSMs:
\vspace{-2.5mm}
\begin{itemize}[leftmargin=4mm,itemsep=0.5mm]
    \item Since Eq.\eqref{eq:S4-disc} is linear, it can be efficiently parallelized until $k=L-1$ using parallel scans~\citep{martin2017parallelizing, smith2022simplified}, unlike a nonlinear RNN 
    where the computation has to be performed sequentially.
    \item While Eq.\eqref{eq:S4-disc} is similar to the linear RNN computation, it is crucial to note that (a) $A$ and $B$ are parameterized in a peculiar way, prescribed by discretization, and (b) these matrices share parameters; in particular $\Delta$ affects both $A$ and $B$. These differences are critical as in SSMs learning is performed on the continuous-time parameters $\tilde A, \tilde B, \tilde C, \tilde D, \Delta$; hence parameterization choices directly affect optimization.
    \item Unlike vanilla RNNs, most SSMs use complex-valued diagonal recurrent matrices that are initialized deterministically using HiPPO theory, and the literature attributes much of the success of SSMs to the specific initialized used \citep{gu2021efficiently, gupta2022diagonal, gu2022train}.
\end{itemize}
\vspace{-2mm}
The points above motivate our investigation: in this paper we consider the same architecture as~\citet{gu2021efficiently,gu2022parameterization,smith2022simplified}, but replace the SSM layer in the recurrent core by an RNN. We then study which steps need to be taken to gradually retrieve S4-like performance on LRA~\citep{tay2020long} tasks. The effectiveness of each of our steps is supported by empirical evidence and theoretical considerations, and leads to the architecture presented in Fig.\ref{fig:deep_rnn}.

\subsection{Experimental setup}
\label{sec:exp_setup}
In this paper, we consider the Long Range Arena benchmark \citep{tay2020long}, a set of tasks designed to test the ability of models to do long-range sequence modelling (except we use coloured images instead of grayscale images for the sequential CIFAR-10 classification task). Transformers fail to perform well on most of these tasks, while deep SSMs have shown remarkable performance on these tasks \citep{gu2021efficiently, dao2022flashattention}. This makes it an appropriate benchmark to explore the long-range modelling capabilities of deep RNNs.

For all our experiments, we use a network of 6 layers with residual connections and layer/batch normalization~\citep{ioffe2015batch,ba2016layer} similar to \citet{gu2021efficiently} (Fig.\ref{fig:deep_rnn}), and we replace the SSM layers with RNN layers, building up to our LRU recurrence in a sequence of steps~(see \S\ref{sec:how_to}). 
All experiments are repeated three times, and we report the mean and standard error. Networks are trained using the AdamW optimizer~\citep{loshchilov2017decoupled}. We use a smaller learning rate and no weight decay on the recurrent parameters, as suggested by~\citet{steil2004backpropagation, gu2021efficiently}. We tune hyperparameters such as learning rates for all models on a logarithmic grid for best accuracy.
See \S\ref{app:detailed_experimental_setup} for more details on our experimental setup.

\section{Designing Performant Deep RNNs}
\label{sec:how_to}
In this section, we discuss the fundamental steps needed for designing RNNs to reach the impressive performance of deep SSMs on the LRA benchmark. We present these steps, already outlined in the introduction, in logical order, and support each claim with experimental evidence and theoretical considerations, expanded in \S\ref{app:theory}.

We consider the architecture of Fig.\ref{fig:deep_rnn}, where the recurrent computation is gradually modified starting from a vanilla RNN. We start by showcasing the advantage of using linear recurrences in \S\ref{sec:linear_rnns}; then, in \S\ref{sec:diagonalization}, we show how to speed-up training and inference without affecting expressivity and initialization distribution. In \S\ref{sec:exponential}, we discuss how~(and why) changing the parameterization and initialization distribution enables us to make the RNN stable and improve long-range modeling. Finally, in \S\ref{sec:norm_pathx}, we finalize the LRU architecture by proposing a normalization strategy for the hidden activations that results in a close match in performance with deep SSMs.

\subsection{Linear RNN layers are performant}
\label{sec:linear_rnns}
One of the main findings of our work is that linear RNN layers can be surprisingly expressive when coupled with nonlinear MLP or GLU~\citep{dauphin2017language} blocks, outperforming tuned nonlinear RNN variants in the same architecture. In Tb.\ref{tb:effect_nonlinearity}, we show that simply removing\footnote{All other settings in the recurrent block are kept the same as in the Vanilla RNN module of Haiku~\citep{haiku2020github}. That is, all matrices have Glorot~\citep{glorot2010understanding} initialization. The rest of the architecture is kept as in Fig.\ref{fig:deep_rnn}, where the LRU block is replaced by an RNN.} the nonlinearity, and therefore computing the next state as $x_{k} = A x_{k-1} + B u_k$, is able to improve test accuracy on most LRA tasks. While the boost provided by vanilla linear RNN blocks leads to performance which is still far behind S4 on some tasks~(sCIFAR, PathFinder and PathX), this first finding motivates us to drop nonlinearities in the recurrence for the rest of this paper. In later sections, we leverage the linearity of the recurrence to significantly speed up training as well as derive principled initialization and normalization principles to learn long-range dependencies. We note that, on the Text and Retrieval tasks, performance using vanilla RNNs already matches performance of deep SSMs~(see Tb.\ref{tb:effect_normalization} for the performance of S4D/S5 on these tasks).

\vspace{2mm}
\begin{table}[ht]
\begin{center}
\begin{footnotesize}
\begin{sc}
\setlength{\tabcolsep}{10pt}
\begin{tabular}{|l||*{4}{c|}}
\hline
Recurrence & \textbf{sCIFAR} & \textbf{ListOps} & \textbf{Text} &\textbf{Retrieval} \TBstrut\\
\hline
RNN-ReLU & 69.7 (0.2)  & 37.6 (8.0)  & 88.0 (0.1)  &  88.5 (0.1) \TBstrut\\
RNN-Tanh & 69.9 (0.3)  & 43.9 (0.1)  &  87.2 (0.1) & 88.9 (0.2)  \TBstrut\\
RNN-Lin   & \textbf{72.2} (0.2) & \textbf{50.4} (0.2) & \textbf{89.1} (0.1)  & \textbf{89.1} (0.1)  \TBstrut\\
\hline
\end{tabular}
\end{sc}
\end{footnotesize}
\end{center}
\vskip -0.1in
\caption{\textit{The effect of removing the nonlinearity from the recurrent unit on test accuracy (\S\ref{sec:linear_rnns}). 
We show here results only for the sCIFAR, ListOps, Text and Retrieval tasks in LRA as these models did not exceed random guessing on PathFinder/PathX~(further improvements in Tb.\ref{tb:effect_parametrization_exp} and \ref{tb:effect_normalization}). Performance of deep SSMs shown in Tb.\ref{tb:effect_normalization}.}
}
\label{tb:effect_nonlinearity}
\end{table}

\vspace{-1mm}
The empirical result in Tb.\ref{tb:effect_nonlinearity} is \textit{surprising}, since recurrent nonlinearities are believed to be a key component for the success of RNNs --- both in the theory and in practice~\citep{siegelmann2012neural, pascanu2013difficulty,erichson2021lipschitz}. Indeed, a strong property of single-layer sigmoidal and $\tanh$ RNNs is Turing completeness, which cannot be achieved by the linear variant~\citep{chung2021turing}. However, the architecture we use (Fig.\ref{fig:deep_rnn}) is deeper than a standard RNN and includes nonlinearies, placed position-wise \textit{after} each RNN block. In~\S\ref{app:expressivity}, we investigate how the expressivity and trainability of deep models is affected by recurrent nonlinearities. Leveraging a spectral analysis and Koopman operator theory~\citep{koopman1932dynamical}, we discuss how interleaving linear RNN layers with nonlinear feedforward blocks is sufficient to approximate highly nonlinear systems. A key observation in our analysis is that position-wise nonlinearities effectively transfer signal information to higher frequencies, enabling the system to go beyond linearity in the spectral domain and increasing the layer capacity. To further strengthen our claim on the advantage of linear recurrences, in \S\ref{app:opt} we show that, while linear and nonlinear RNNs share an important class of approximating functionals~(linear operators, see~\cite{wangeffects}),  nonlinear activations can potentially slow down training.

\subsection{Using complex diagonal recurrent matrices is efficient}
\label{sec:diagonalization}
We now show that we can significantly speed up training and inference for deep linear RNNs without losing performance by using complex-valued diagonal recurrent matrices. While the idea of diagonalizing linear systems for computational efficiency is a dominating feature of all deep SSMs since the introduction of DSS by~\citet{gupta2022diagonal}, in this section we construct our diagonalized version to exactly match the initialization spectrum~(see \S\ref{sec:eig}) of the Glorot-initialized deep linear RNN in Tb.\ref{tb:effect_nonlinearity}. Our main purpose with this approach is to \textit{disentangle the effects of initialization and diagonalization on performance}~(cf. Tb.\ref{tb:effect_parametrization_exp} and Tb.\ref{tb:effect_normalization}).

We start in \S\ref{sec:eig} by recalling some useful linear algebra elements, and then proceed in \S\ref{sec:diagonalization_learning} with a discussion on how to diagonalize the recurrence while preserving the eigenvalue spectrum at initialization.

\subsubsection{Linear RNN eigendecomposition}
\label{sec:eig}
The recurrence $x_k = Ax_{k-1} + B u_k$ can be unrolled easily using the assumption that $x_{-1}=0\in\R^N$:
\begin{equation}
    x_{0} = Bu_0,\quad x_1 = ABu_0 + Bu_1,\quad x_2 = A^2Bu_0 + ABu_1 + B u_2, \quad \dots \quad \implies \quad x_k = \sum_{j=0}^{k-1} A^jBu_{k-j}.
    \label{eq:lin_rnn_unroll}
\end{equation}
Exponentiations of the matrix $A$ in the equation above are the source of the well-known \textit{vanishing/exploding gradient} issue in RNNs~\citep{bengio1994learning,pascanu2013difficulty}. While in nonlinear RNNs the state $x_k$ is forced to live on the compact image of the activation function, the hidden-state of our linear variant can potentially explode or vanish
exponentially as $k$ increases. This phenomenon can be better understood by leveraging an eigenvalue~(a.k.a.~spectral) analysis: up to an arbitrarily small perturbation of the entries, every matrix $A\in\R^{N\times N}$ is diagonalizable\footnote{In other words, the set of non-diagonalizable matrices has measure zero, see e.g.~\citet{zhinan2002jordan} for a proof idea.}~\citep{axler1997linear}, i.e.~one can write $A = P\Lambda P^{-1}$, where $P\in\Cmp^{N\times N}$ is an invertible matrix and $\Lambda = \text{diag}(\lambda_1,\lambda_2,\dots,\lambda_N)\in\Cmp^{N\times N}$. It is essential to note that, unlike the symmetric setting where eigenvalues and eigenvectors are real, in the non-symmetric case\footnote{Take e.g. $A = ((0,1)(-1,0))$. The solution to the standard eigenvalue equation gives $\lambda = \pm i$, where $i$ is the imaginary unit.} one has to allow for \textit{complex} entries to achieve full equivalence. Plugging the decomposition $A=P\Lambda P^{-1}$ into Eq.\eqref{eq:lin_rnn_unroll} and multiplying both sides by $P^{-1}$, we get $\bar x_k = \sum_{j=0}^{k-1} \Lambda^j\bar B u_{k-j}$, where $\bar x_k := P^{-1}x_k$, $\bar B := P^{-1}B$. The output can then be computed as $y_k = \Re[\bar C \bar x_k] + D u_k\in\R^H$, where $\bar C = CP^{-1}$, and we take the real part of $\bar C \bar x_k$. Therefore, instead of learning $(A,B,C,D)$, one can equivalently learn $(\Lambda, \bar B, \bar C, D)$, where $\Lambda, \bar B, \bar C$ are complex valued, and $\Lambda$ is a diagonal matrix.

\vspace{-3mm}
\paragraph{Are complex numbers really necessary?} We adopt complex numbers since they provide a convenient and compact representation of non-symmetric matrices in diagonal form. However this is not the only option -- one could work~(almost) as efficiently using real numbers. We discuss how this can be achieved in \S\ref{app:real_jordan_form}. 

\vspace{-3mm}
\paragraph{Stability.} 
Since $\bar x_k = \sum_{j=0}^{k-1} \Lambda^j\bar B u_{k-j}$, the norm of component $j$ of $\bar x$ at timestamp $k$  evolves such that $|x_{k,j}|=O(|\bar x_{k,j}|)= O(|\lambda_j|^k)$. Therefore, a sufficient condition to ensure stability~(i.e. $x_k$ does not explode) is therefore $|\lambda_j|<1$ for all $j$ \citep{gu2021efficiently}.

\subsubsection{Learning in the diagonalized space} 
\label{sec:diagonalization_learning}
Learning recurrent linear systems in diagonal form provides substantial computational speedups both for training and inference. For example, in our implementation of sCIFAR, we found diagonal linear RNNs to be $\sim$8 times faster to train than a dense RNN with ReLUs, matching the speed of our implementations of S4D and S5. The main reasons for this computational benefit are that (a) taking powers of diagonal matrices is trivial (speeding up both training and inference), while exponentiating dense matrices is computationally expensive, and (b) while nonlinear recurrences must be computed sequentially, unrolling a linear recurrence can be parallelized using associative scans resulting in faster training \citep{smith2022simplified, gupta2022diagonal}.

\vspace{-3mm}
\paragraph{Equivalent initialization.} To disentangle the benefits of diagonal linear systems from the role of initialization, we seek an initialization for the diagonal system which keeps the eigenvalue spectrum of the recurrence unchanged when comparing our diagonal system with the dense linear RNN in \S\ref{sec:linear_rnns}, where $A$ followed Glorot initialization. Fortunately, we can use a classical result from random matrix theory~\citep{ginibre1965statistical}.

\begin{restatable}[Strong circular law]{thm}{ginibre}
Let $\mu_N$ be the empirical spectral measure of $A_N$, where $A_{N}$ is a real $N\times N$ matrix with i.i.d. Gaussian entries, each with zero mean and variance $1/N$. Then, $\mu_N$ converges weakly almost
surely as $N\rightarrow \infty$ to the uniform probability measure on $\{|z| \le 1\} \subseteq \mathbb{C}$.
\label{thm:ginibre}
\end{restatable}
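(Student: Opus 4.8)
I would prove the circular law by the \emph{logarithmic-potential / Hermitization} method, which applies directly to real Gaussian entries. Recall the standard potential-theoretic criterion: if the random measures $\mu_N$ are almost surely tight and, for Lebesgue-almost every $z\in\Cmp$, the logarithmic potentials $U_{\mu_N}(z):=-\int_\Cmp\log|z-w|\,d\mu_N(w)$ converge almost surely to $U_{\mu_{\mathrm{circ}}}(z)$, then $\mu_N\to\mu_{\mathrm{circ}}$ weakly almost surely. Tightness is immediate, since every eigenvalue obeys $|\lambda|\le\|A_N\|_{\mathrm{op}}$ and $\|A_N\|_{\mathrm{op}}$ is a.s.\ bounded (Bai--Yin). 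For the potentials one writes
\[
U_{\mu_N}(z)=-\tfrac{1}{N}\log\bigl|\det(A_N-zI)\bigr|=-\tfrac{1}{2N}\log\det\!\bigl((A_N-zI)(A_N-zI)^{*}\bigr)=-\tfrac{1}{2}\!\int_0^\infty\!\log x\;d\nu_N^z(x),
\]
where $\nu_N^z$ is the empirical eigenvalue distribution of the Hermitian positive semidefinite matrix $H_N^z:=(A_N-zI)(A_N-zI)^{*}$. So everything reduces to Hermitian random-matrix analysis together with control of $\log x$ near $x=0$.

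For fixed $z$ I would compute the limit $\nu^z$ of $\nu_N^z$ through the Stieltjes transform $g_N^z(\eta)=\tfrac{1}{N}\tr(H_N^z-\eta I)^{-1}$: a Schur-complement/resolvent expansion (equivalently, passing to the symmetrized $2N\times2N$ matrix with off-diagonal blocks $A_N-zI$ and $(A_N-zI)^{*}$ and writing its Marchenko--Pastur-type self-consistent equation) shows that the limit $g^z$ solves a low-degree polynomial equation whose coefficients depend only on $|z|$, yielding an explicit $\nu^z$. A direct integration then gives
\[
\tfrac{1}{2}\!\int_0^\infty\!\log x\;d\nu^z(x)=
\begin{cases}\log|z|,&|z|\ge1,\\[1mm]\tfrac{1}{2}(|z|^2-1),&|z|<1,\end{cases}
\]
which is exactly $-U_{\mu_{\mathrm{circ}}}(z)=\int\log|z-w|\,d\mu_{\mathrm{circ}}(w)$ for the uniform law on $\{|w|\le1\}$; applying $-\tfrac{1}{2\pi}\Delta$ to $U_{\mu_{\mathrm{circ}}}$ recovers the density $\tfrac{1}{\pi}\mathbf{1}_{\{|z|<1\}}$. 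Upgrading $g_N^z\to g^z$ (a.s., via a variance bound and Borel--Cantelli) to $\nu_N^z\to\nu^z$ is routine.

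The main obstacle is that weak convergence $\nu_N^z\to\nu^z$ alone does \emph{not} give $\int\log x\,d\nu_N^z\to\int\log x\,d\nu^z$, because $\log$ blows up at both ends. The tail at $+\infty$ is harmless ($\|H_N^z\|_{\mathrm{op}}=O(1)$ a.s.). The genuine difficulty is a quantitative lower bound on the smallest singular value $\sigma_{\min}(A_N-zI)$, preventing mass from escaping to $-\infty$ under $\log$; this is the delicate step in every proof of the circular law, and it is where the Gaussian hypothesis is decisive. For Gaussian $A_N$ the density of $\sigma_{\min}(A_N-zI)$ is available in closed form (Edelman; Szarek), so $\Pr[\sigma_{\min}(A_N-zI)\le N^{-c}]$ is summable in $N$, and Borel--Cantelli kills the singularity a.s., giving the required uniform integrability; for general i.i.d.\ entries this is precisely the part that needs the Rudelson--Vershynin anti-concentration machinery. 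Combining this with the previous step along a countable dense set of $z$ closes the argument. Two remarks: (i) a real Gaussian matrix has $\Theta(\sqrt N)$ real eigenvalues (a Pfaffian point process, Lehmann--Sommers/Edelman), but these carry $\mu_N$-mass $O(N^{-1/2})\to0$ and do not affect the limit; (ii) all the estimates are almost sure rather than in expectation because Gaussian concentration supplies the necessary variance and exponential bounds.

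Finally, if one only wanted the complex Ginibre ensemble, the two middle steps can be bypassed entirely: the eigenvalues then form a determinantal point process with one-point intensity $\rho_N(z)=\tfrac{N}{\pi} e^{-N|z|^2}\sum_{k=0}^{N-1}(N|z|^2)^k/k!$, and a Poisson tail estimate gives $\tfrac{1}{N}\rho_N(z)\to\tfrac{1}{\pi}\mathbf{1}_{\{|z|<1\}}$ pointwise; the determinantal variance bound plus Borel--Cantelli upgrades this to almost sure weak convergence of $\mu_N$ to the uniform law on the unit disk.
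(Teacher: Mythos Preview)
The paper does not prove this statement. Theorem~\ref{thm:ginibre} is stated as a classical random-matrix fact (attributed to \citet{ginibre1965statistical}) and is used only to motivate the uniform-on-the-disk initialization in \S\ref{sec:diagonalization_learning}; the appendix contains proofs of Lemma~\ref{lemma:sampling_exp} and Proposition~\ref{prop:r_min_max_effect} but none for this theorem. So there is nothing to compare your argument against on the paper's side.

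Your sketch is the standard Girko--Bai Hermitization route and is correct in outline. A couple of small comments if you actually want to write it up: (i) the citation to Ginibre (1965) is somewhat loose historically---Ginibre computed correlation functions for the \emph{complex} Gaussian ensemble, whereas the almost-sure circular law for the real Gaussian ensemble came later via the Hermitization machinery you describe (Girko, Bai, Edelman for $\sigma_{\min}$, and ultimately Tao--Vu for the general i.i.d.\ case); (ii) at the singular-value step you should be slightly careful that Edelman's exact density is for $\sigma_{\min}(A_N)$ without the shift $-zI$---for shifted real Gaussian matrices you still get polynomial lower bounds on $\sigma_{\min}$ (e.g.\ Sankar--Spielman--Teng or the Rudelson--Vershynin estimates specialized to Gaussians), which is enough for the Borel--Cantelli step, but the ``closed-form density'' shortcut is not literally available for all $z$. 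These are cosmetic points; the structure of your argument is right.
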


\begin{figure}
    \centering
    \begin{minipage}{0.66\textwidth}
    \vspace{-3mm}
    \centering
        \includegraphics[height=36mm]{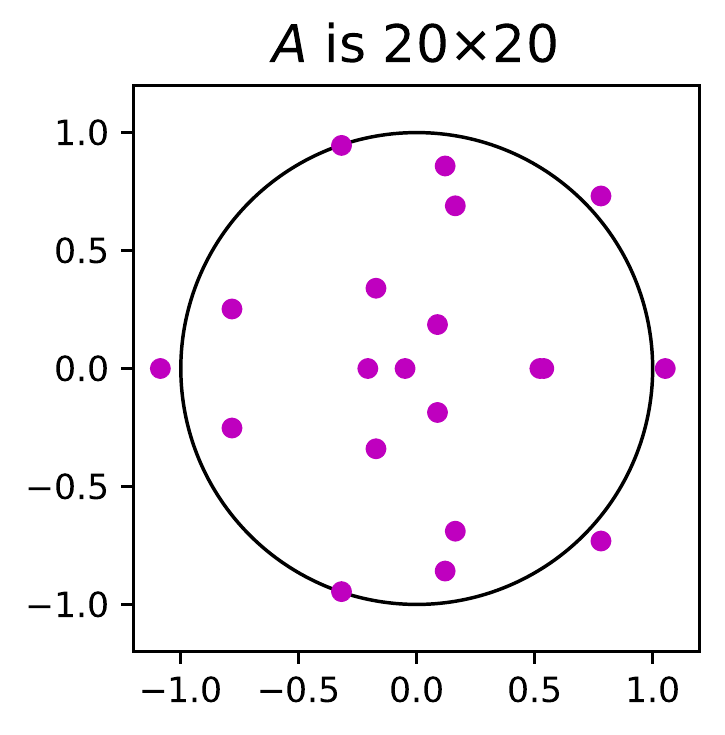}
        \includegraphics[height=36mm]{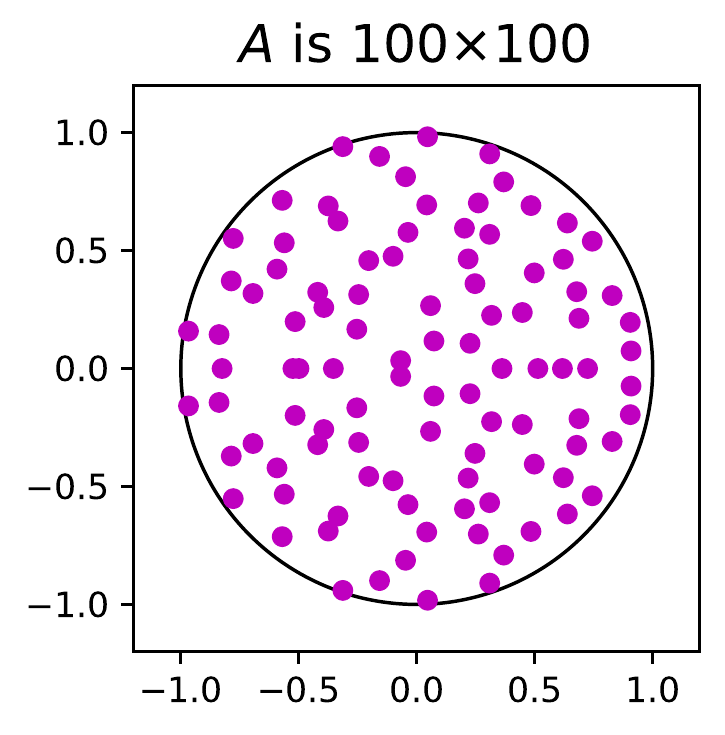}
        \includegraphics[height=36mm]{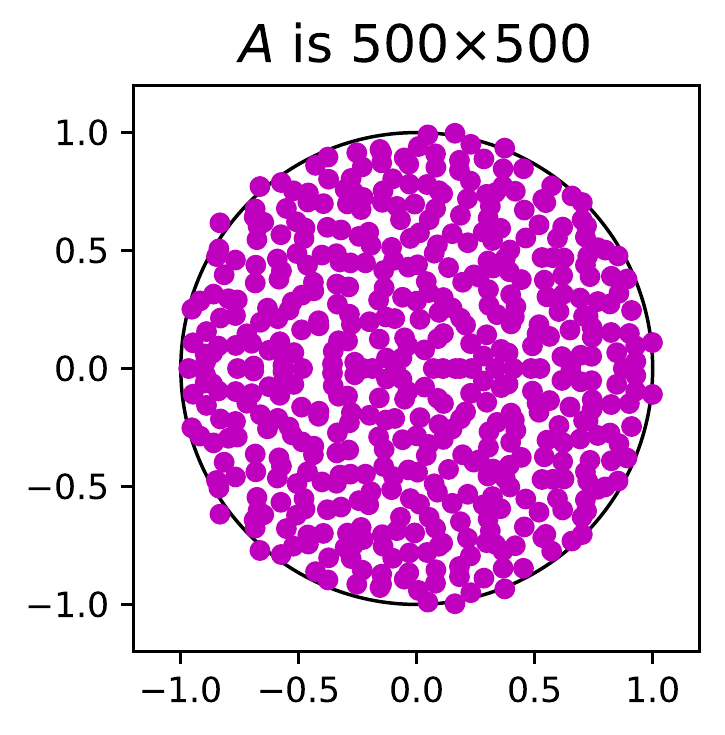}
        \vspace{-2mm}        \caption{ \textit{Eigenvalues of $A\in\R^{N\times N}$ following Glorot initialization: each entry of $A$ is sampled independently from a Gaussian with mean 0 and variance $1/N$. The eigenvalues are complex~($A$ is not symmetric) and are represented on the complex plane. The black circle is the unit disk $\{|z| = 1\} \subseteq \mathbb{C}$. The limit behavior~(uniform initialization) is predicted by Thm.~\ref{thm:ginibre}.}}
        \label{fig:ginibre}
    \end{minipage}
    \hspace{4mm}
    \begin{minipage}{0.29\textwidth}
    \vspace{-3mm}
    \centering
    \includegraphics[height=36mm]{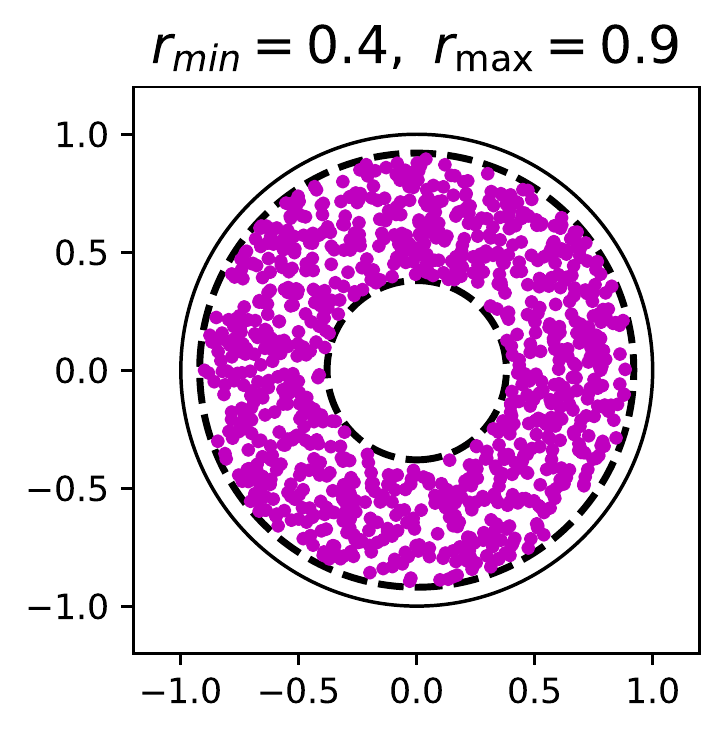}
        \vspace{-2mm}
    \caption{ \textit{Eigenvalues of a diagonal matrix $A$ with entries sampled using Lemma~\ref{lemma:sampling_exp}. For $r_{\min}=0$, $r_{\max}=1$, the distribution coincides with Glorot init. in the limit. }}
    \end{minipage}
\end{figure}
\vspace{-2mm}
The theorem above, illustrated in Fig.\ref{fig:ginibre}, shows that under Glorot initialization the spectrum of $A$ is \textit{de-facto} sampled from the unit disk in $\Cmp$. This result motivates the strong performance of linear RNNs in \S\ref{sec:linear_rnns}, since it implies Glorot initialization provides an approximately stable initialization~(see definition in \S\ref{sec:eig}).\footnote{Later in training, the system is less likely to become unstable if the learning rate is small enough. 
} Moreover, from Theorem~\ref{thm:ginibre}, an \textit{equivalent spectral initialization} follows for the diagonal system, which holds exactly for the large width limit: $\Lambda$ should be diagonal with entries sampled uniformly on the unit disk. Using the definition of exponential of a complex number: $\exp(-\nu + i\theta) := e^{-\nu}(\cos(\theta) + i \sin(\theta))$, we adopt a simple scheme for sampling uniformly on a ring in between circles with radii $r_{\min}$ and $r_{\max} $in $\Cmp$.

\begin{restatable}[]{lem}{sampling}
 Let $u_1,u_2$ be independent uniform random variables on the interval $[0,1]$. Let $0\le r_{\min}\le r_{\max}\le1$. Compute $\nu = -\frac{1}{2}\log\left(u_1(r_{\max}^2-r_{\min}^2)+r_{\min}^2\right)$ and $\theta = 2\pi u_2 $. Then $\exp(-\nu+i\theta)$ is uniformly distributed on the ring in $\mathbb{C}$ between circles of radii $r_{\min}$ and $r_{\max}$.
 \label{lemma:sampling_exp}
 \end{restatable}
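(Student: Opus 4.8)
The plan is to verify that the law of $\exp(-\nu+i\theta)$ coincides with the uniform (area) measure on the annulus $\mathcal{A}:=\{z\in\Cmp : r_{\min}\le|z|\le r_{\max}\}$ by passing to polar coordinates. First I would record the relevant characterization: writing $z=\rho e^{i\phi}$ with $\rho\in[r_{\min},r_{\max}]$ and $\phi\in[0,2\pi)$, the normalized Lebesgue measure on $\mathcal{A}$ has density $\frac{\rho}{\pi(r_{\max}^2-r_{\min}^2)}$ in the coordinates $(\rho,\phi)$, since the area element is $\rho\,d\rho\,d\phi$. Equivalently — and this is the form I will actually use — a complex random variable $Z$ is uniform on $\mathcal{A}$ if and only if (i) its argument is uniform on $[0,2\pi)$, (ii) its squared modulus is uniform on $[r_{\min}^2,r_{\max}^2]$, and (iii) argument and modulus are independent; the equivalence of (ii) with the marginal density $\tfrac{2\rho}{r_{\max}^2-r_{\min}^2}$ on $\rho$ is just the change of variables $\rho\mapsto\rho^2$.

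Next I would compute the modulus and argument of $\exp(-\nu+i\theta)$ directly from $\exp(-\nu+i\theta)=e^{-\nu}(\cos\theta+i\sin\theta)$: the argument is $\theta=2\pi u_2$, which is uniform on $[0,2\pi)$ because $u_2$ is uniform on $[0,1]$, giving (i). For the modulus, substituting $\nu=-\tfrac12\log\!\big(u_1(r_{\max}^2-r_{\min}^2)+r_{\min}^2\big)$ yields $e^{-2\nu}=u_1(r_{\max}^2-r_{\min}^2)+r_{\min}^2$, so the squared modulus is an increasing affine image of $u_1$; since $u_1$ is uniform on $[0,1]$, its image $e^{-2\nu}$ is uniform on $[r_{\min}^2,r_{\max}^2]$, giving (ii). Finally, (iii) is immediate: the modulus depends only on $u_1$ and the argument only on $u_2$, and $u_1,u_2$ are independent by hypothesis.

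Combining (i)--(iii) with the characterization from the first step shows that $\exp(-\nu+i\theta)$ is uniformly distributed on $\mathcal{A}$, which is exactly the claim (the degenerate event $u_1=0$, relevant only when $r_{\min}=0$, has probability zero and can be ignored). I do not expect a genuine obstacle here; the only step needing slight care is getting the Jacobian right so that "uniform on the ring" is faithfully translated into "uniform squared-radius together with an independent uniform angle" — after that, everything reduces to a one-line substitution and the observation that an affine image of a uniform variable is uniform.
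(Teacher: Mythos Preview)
Your proof is correct and follows essentially the same approach as the paper. The paper argues via inverse transform sampling---computing the radial CDF $F(r)=\frac{r^{2}-r_{\min}^{2}}{r_{\max}^{2}-r_{\min}^{2}}$ and solving $u_1=F(e^{-\nu})$ for $\nu$---whereas you run the same computation in the forward direction, verifying that $e^{-2\nu}$ is an affine image of $u_1$ and hence uniform on $[r_{\min}^{2},r_{\max}^{2}]$; the key Jacobian observation (uniform area measure $\Leftrightarrow$ uniform squared radius with independent uniform angle) is identical in both.
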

We recover the spectrum of Glorot-initialization (in the limit of infinite width) by setting $r_{min} = 0$ and $r_{max} = 1$ (we will explore tuning these hyper-parameters in \S\ref{sec:exponential}). Tb.\ref{tb:effect_parametrization_exp}~(first two rows) shows the results of learning deep linear RNNs in complex diagonal form,\footnote{To avoid issues with backpropagation on complex variables, each complex parameter in the network is stored and learned as a pair of floats encoding real and imaginary parts.} where each diagonal entry of $\Lambda$ is initialized uniformly on unit disk in $\Cmp$ using Lemma~\ref{lemma:sampling_exp} with $[r_{\min}, r_{\max}]=[0,1]$. In our experiments, $\bar B, \bar C$~(which we rename for convenience back to $B$ and $C$) follow Glorot initialization for both real and imaginary parts~(parameterized separately), with halved variance in each component to preserve lengths on the input-output projections~\citep{glorot2010understanding}. Finally, after the SSM computation, the real part of the signal is kept and the imaginary discarded~(as in~\cite{gupta2022diagonal, gu2022parameterization}). 

Our results in Tb.\ref{tb:effect_parametrization_exp} show that diagonalizing the recurrence surprisingly improves accuracy on tasks like ListOps and sCIFAR. More importantly, it drastically reduces training and inference time on all LRA tasks (see Tb.\ref{tb:steps_per_sec} in \S\ref{app:training_speedups} for training speed comparisons), and makes the RNN just as fast to train as deep SSMs like S4D and S5.

\subsection{Benefits of stable exponential parameterization}
\label{sec:exponential}
In \S\ref{sec:diagonalization} we showed that moving to complex diagonal recurrences is computationally efficient. However we also observed that learning the diagonal model can be more unstable than learning the dense model in some experiments. To learn long-range dependencies and avoid quickly vanishing gradients, eigenvalues in the recurrence need to have magnitude close to 1 \citep{gu2022train, gupta2022diagonal}; however, these eigenvalues are also likely to make the system unstable during training. In this section, we show the benefits of a stable parameterization of the RNN, and of tuning $r_{\min}$ and $r_{\max}$~(see Lemma~\ref{lemma:sampling_exp}).
\vspace{-3mm}

\paragraph{Optimization under exponential parameterization.} Lemma~\ref{lemma:sampling_exp} suggests a natural parameterization of the diagonalized RNN as $\Lambda = \diag(\exp(-\nu + i \theta))$ with $\nu\in\R^N$ and $\theta\in\R^N$ as the learnable parameters (instead of the real and imaginary parts of $\Lambda$). As we explain in~\S\ref{app:opt} leveraging an easy-to-visualize 2-dimensional example~(see Fig.\ref{fig:learning_powers}), this choice decouples magnitude and oscillation frequencies, making optimization with Adam easier. The positive effects of this exponential parametrization, which resembles some features of ZOH discretization~(see \S\ref{sec:preliminaries} and \S\ref{sec:explaining_s4}) and notably takes the performance of PathFinder above random chance, can be observed in the third row of Tb.\ref{tb:effect_parametrization_exp}.

\vspace{-3mm}
\paragraph{Enforcing stability.} An important benefit of the exponential parameterization is that it makes it simple to enforce stability on the eigenvalues. To see this, note that at initialization, $|\lambda_j| = |\exp(-\nu_j)|\le 1$ since $\nu_j>0$. Therefore, to preserve stability during training, we can use an exponential or another positive nonlinearity: $\lambda_j:=\exp(-\exp(\nu_j^{\log})+i\theta_j)$, where $\nu^{\log}\in\R^N$ is the parameter we optimize, and we set $\nu_j^{\log} := \log(\nu)$ at initialization. Note that a similar idea is used in deep SSMs~\citep{gu2021efficiently} in the context of discretization. We choose an exponential non-linearity over a simple ReLU nonlinearity to increase granularity around $|\lambda| = 1$, achieved at $\nu^{\log} = -\infty$~(while $|\lambda| =0$ is achieved at $\nu^{\log} =\infty$).
Stable parameterization helps on most LRA tasks. In the fourth row of Tb.\ref{tb:effect_parametrization_exp}, we show its effects on sCIFAR, ListOps and Pathfinder. 
We observe the most drastic improvement on Pathfinder, one of the harder long-range dependency tasks in LRA, where performance now reaches above $93\%$.

The benefits of the stable parameterization becomes more apparent when we explore the idea of initializing the eigenvalues of $\Lambda$ on a ring closer to the unit disk~(increasing $r_{\min}$ closer to $1$ in Lemma~\ref{lemma:sampling_exp}) to bias the network towards longer-range interactions and avoid vanishing gradients. Indeed, as discussed in detail in~\citet{gu2022train, gupta2022diagonal}, for reasonings requiring consideration of interactions between distant tokens, eigenvalues in the recurrence need to have magnitude close to $1$. Otherwise, as clear from the diagonal version of Eq.\eqref{eq:lin_rnn_unroll}, when taking powers of eigenvalues close to the origin, the signal from past tokens quickly dies out~(see \S\ref{sec:eig}). As we show in the last row of Tb.\ref{tb:effect_stability_gamma_cifar} in \S\ref{app:additional_results}, without enforcing stability, performance starts to degrade as we increase $r_{\max}$ past 0.9 in the sCIFAR task. With stability enforced, we can increase $r_{\max}$ up to 0.99 and improve performance. We see similar benefits on the other tasks where we sweep different values of $r_{\min}$ and $r_{\max}$ 
(Tbs.\ref{tb:effect_parametrization_exp_stddevs}\ \& \ref{tb:effect_normalization_stddevs} have more details).
Finally, note that while here we explore changing the magnitude of the eigenvalues of $\Lambda$, in \S\ref{sec:norm_pathx} we also show the benefits of initializing the eigenvalues to have a small phase to learn more global patterns, useful for particularly long-range reasoning tasks.

\begin{table}
\begin{center}
\begin{footnotesize}
\begin{sc}
\setlength{\tabcolsep}{10pt}
\begin{tabular}{|l||*{6}{c|}}
\hline
 & \textbf{sCIFAR} & \textbf{ListOps} & \textbf{Pathfinder} \TBstrut\\
\hline
Dense $A$ & 72.2 (0.2) & 50.4 (0.2) & \XSolidBrush \TBstrut\\
$\Lambda$ Real + Im  & 86.5 (0.1) & 58.8 (0.3) & \XSolidBrush \TBstrut\\
$\Lambda$ Exp & 85.4 (0.7) & \textbf{60.5} (0.3) & 65.4 (9.0)  \TBstrut\\
$\Lambda$ Stable Exp & 87.2 (0.4) & 59.4 (0.3) & 93.5 (0.5) \TBstrut\\
+ Ring Init & \textbf{88.1} (0.0) & 59.4 (0.3) & \textbf{94.4} (0.3) \TBstrut\\

\hline
\end{tabular}
\end{sc}
\end{footnotesize}
\end{center}
\vskip -0.1in
\caption{\textit{Test accuracy of a linear diagonal complex RNNs under different parametrizations of the transition matrix~(see \S\ref{sec:diagonalization}). Performance directly improves the results in Tb.\ref{tb:effect_nonlinearity}, and showcases the advantage of exponential~(polar) representation of $\Lambda$.  In bold font is the best parametrization option for linear RNN blocks. Ring Init denotes a changed initialization where $r_{\min}$ and $r_{\max}$ are tuned. Performance on the Text and Retrieval tasks is not shown as linear RNNs already align with S4 results~(c.f. Tb.\ref{tb:effect_nonlinearity} with Tb.\ref{tb:effect_normalization}). These models cannot solve PathX yet, and requires normalizing the hidden activations and initializing the eigenvalues of $\Lambda$ with small phase~(see Tb.\ref{tb:effect_normalization}). }}
\vspace{-3mm}
\label{tb:effect_parametrization_exp}
\end{table}


\subsection{Additional considerations for long-range reasoning tasks}
\label{sec:norm_pathx}

Up to this point, our model did not succeed in learning PathX --- the hardest dataset in our benchmark, with a sequence length of $16k$ tokens. In this section, we discuss the additional modifications we need to make to improve our model's ability to learn very long-range dependencies and finalize our LRU model.

\vspace{-3mm}
\paragraph{Normalization.} In \S\ref{sec:exponential}, we initialized the eigenvalues of $\Lambda$ close to the unit disk for better performance on long-range tasks. However, we observed that as we moved $r_{\min}$ and $r_{\max}$ closer to 1, the training loss also started to blow up at initialization~(see Fig.\ref{fig:normalization_smallphase_pathx}). In this section, we first present a result explaining this phenomenon, before deriving a practical normalization scheme for the hidden activations to tackle this problem and further improve performance.

\begin{restatable}[Forward-pass blow-up]{prop}{blowup}
 Let $\Lambda$ be diagonal with eigenvalues sampled uniformly on the ring in $\mathbb{C}$ between circles of radii $r_{\min}<r_{\max}<1$. Then, under constant or white-noise input and Glorot input projection, we have that the squared norm of the state $x_k$ converges as $k\to\infty$ to the following quantity.
 \vspace{-1mm}
 \begin{equation*}
     \E[\|x_\infty\|^2_2] = \frac{1}{r_{\max}^2-r_{\min}^2}\log\left(\frac{1-r_{\min}^2}{1-r_{\max}^2}\right) \E[\| B u\|_2^2].
 \end{equation*}
 \vspace{-3mm}
 \label{prop:r_min_max_effect}
\end{restatable}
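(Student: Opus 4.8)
The plan is to work directly with the unrolled diagonal recurrence $x_k = \sum_{j=0}^{k-1}\Lambda^j B u_{k-j}$ from Eq.~\eqref{eq:lin_rnn_unroll}, compute $\E[\|x_k\|^2_2]$ coordinate-by-coordinate, and then let $k\to\infty$. Writing $\|x_k\|^2_2 = \sum_{n=1}^N |x_{k,n}|^2$ with $x_{k,n} = \sum_{j=0}^{k-1}\lambda_n^j (Bu_{k-j})_n$, the three independent sources of randomness are the input $u$, the Glorot-initialized projection $B$, and the eigenvalues $\lambda_n = |\lambda_n|\,e^{i\theta_n}$ sampled on the ring via Lemma~\ref{lemma:sampling_exp}; in particular $\theta_n$ is uniform on $[0,2\pi]$ and, crucially, $|\lambda_n|^2 = e^{-2\nu_n}$ is uniform on $[r_{\min}^2, r_{\max}^2]$. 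I take expectations over all three.

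Next I would reduce both input models to the same per-coordinate expression. For white-noise input the $u_{k-j}$ are i.i.d.\ and zero-mean, so the off-diagonal terms of $|x_{k,n}|^2 = \sum_{j,j'}\lambda_n^j\bar\lambda_n^{j'}(Bu_{k-j})_n\overline{(Bu_{k-j'})_n}$ vanish in expectation and only $\sum_{j=0}^{k-1}|\lambda_n|^{2j}\,\E[|(Bu)_n|^2]$ survives. For constant input one instead sums the geometric series exactly, $x_{k,n} = (Bu)_n\,\tfrac{1-\lambda_n^k}{1-\lambda_n}$, and invokes the Poisson-kernel identity $\tfrac{1}{2\pi}\int_0^{2\pi}\big|\tfrac{1-\rho^k e^{ik\theta}}{1-\rho e^{i\theta}}\big|^2 d\theta = \sum_{j=0}^{k-1}\rho^{2j}$, obtained by expanding the ratio as $1+z+\dots+z^{k-1}$ with $z=\rho e^{i\theta}$ and integrating termwise. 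In both cases, using independence of $(B,u)$ from the eigenvalues, this collapses to
\begin{equation*}
\E[|x_{k,n}|^2] = \E\big[|(Bu)_n|^2\big]\cdot\frac{1-|\lambda_n|^{2k}}{1-|\lambda_n|^2}.
\end{equation*}

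Finally I would pass to the limit and assemble. By monotone convergence (white noise) or dominated convergence (constant input, bounding $|1-\lambda_n|^{-2}$ using $r_{\max}<1$), $\E[|x_{k,n}|^2]\to \E[|(Bu)_n|^2]\,\E[(1-|\lambda_n|^2)^{-1}]$, so summing over $n$ and using that the eigenvalues are i.i.d.,
\begin{equation*}
\E[\|x_\infty\|^2_2] = \Big(\sum_{n=1}^N\E[|(Bu)_n|^2]\Big)\,\E\!\left[\frac{1}{1-|\lambda|^2}\right] = \E[\|Bu\|_2^2]\int_{r_{\min}^2}^{r_{\max}^2}\frac{ds}{(1-s)\,(r_{\max}^2-r_{\min}^2)},
\end{equation*}
where the last step uses that $|\lambda|^2\sim\mathrm{Unif}[r_{\min}^2,r_{\max}^2]$. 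Evaluating the integral as $\frac{1}{r_{\max}^2-r_{\min}^2}\log\frac{1-r_{\min}^2}{1-r_{\max}^2}$ gives the claim.

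The part needing the most care is the constant-input case: unlike white noise it does not annihilate the time-cross-terms, so one genuinely needs the Poisson-kernel computation (equivalently, Parseval on the finite geometric sum) to see that its $\theta$-average agrees with the white-noise answer term-by-term in $k$. I would also be careful to note that the only role of the Glorot assumption is to make $\E[|(Bu)_n|^2]$ finite and independent of $\Lambda$ --- so that the factorization is legitimate and the prefactor is exactly $\E[\|Bu\|_2^2]$; no finer property of the Gaussian is used, and the argument covers any input projection independent of the recurrence.
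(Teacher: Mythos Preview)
Your proposal is correct and follows essentially the same route as the paper: both arguments reduce the constant-input case to the white-noise case by averaging over the uniform phase $\theta$ (your Poisson-kernel identity is exactly the statement that $\E_\theta[e^{i(n-m)\theta}]=\delta_{nm}$ applied to the expanded double sum), and then both compute $\E[(1-|\lambda|^2)^{-1}]$ using that $|\lambda|^2\sim\mathrm{Unif}[r_{\min}^2,r_{\max}^2]$. Your treatment is in fact slightly more careful than the paper's, which passes to $k=\infty$ at the outset without justifying the interchange; your monotone/dominated convergence remarks and the observation that Glorot is only used for independence and finiteness of $\E[\|Bu\|_2^2]$ are welcome additions.
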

This result has the following intuitive form if $r_{\min}=r_{\max}=r$: if we initialize $\rho$-close to the unit disk, the forward pass blows up by a factor $1/\rho$ (since the contributions from previous states take longer to decay): let $\epsilon = r_{\max}^2-r_{\min}^2$ and $\rho = 1-r_{\max}^2$, then:
 \begin{equation}
     \lim_{\epsilon\to 0}\frac{\E[\|x_\infty\|^2_2]}{\E[\|B u\|_2^2]} = \lim_{\epsilon\to 0}\left[ \frac{1}{\epsilon}\log\left(1+\frac{\epsilon}{\rho}\right)\right] = \lim_{\epsilon\to 0} \left[\frac{1}{\epsilon}\left(\frac{\epsilon}{\rho} + O(\epsilon^2)\right)\right] = \frac{1}{\rho} = \frac{1}{1-r^2}.
 \end{equation}
 
Towards the derivation of an effective normalization scheme for the forward pass, we present a simplified derivation of the $1/\rho$ gain formula for the one-dimensional setting under white-noise input\footnote{We use the random input assumption for our normalization scheme as we found it to work well in practice.}: let $\Lambda =\lambda\in\Cmp$, and $ B=1$. Let $ p^*$ denote the conjugate of $p\in\Cmp$, we have that $|p|^2 = p^* p$ and in expectation over the input, using Eq.\eqref{eq:lin_rnn_unroll} and the fact that $\E[u_{k-i} u_{k-j}] = 0$ for $i\ne j$:
 \begin{align}
    \E|x_k|^2= \left(\sum_{i=0}^{k-1} \lambda^i \E[u_{k-i}]\right)\left(\sum_{j=0}^{k-1} \lambda^j \E[u_{k-j}]\right)^*
     = \sum_{i,j=0}^{k-1} \lambda^i  (\lambda^j)^* \E[u_{k-i} u_{k-j}]
    = \sum_{i=0}^{k-1}|\lambda|^{2i} \overset{\infty}{\to} \frac{1}{1-|\lambda|^{2}}.
    \label{eq:gamma_derivation}
\end{align}

\begin{table*}
\vspace{-3mm}
\begin{center}
\begin{footnotesize}
\begin{sc}
\setlength{\tabcolsep}{4pt}
\begin{tabular}{|l||*{6}{c|}}
\hline
& \textbf{sCIFAR} & \textbf{ListOps} & \textbf{Text} &\textbf{Retrieval} & \textbf{Pathfinder} & \textbf{PathX} \TBstrut\\
\hline
\textbf{LRU} & 89.0 (0.1) & 60.2 (0.8) & 89.4 (0.1) & 89.9 (0.1) & 95.1 (0.1) & 94.2 (0.4) \TBstrut\\
\hline
S4D (our reprod.) & 91.5 (0.2) & 60.2 (0.3) & 86.4 (0.0) & 89.5 (0.0) & 94.2 (0.3) & 97.5 (0.0) \TBstrut\\
S5 (our reprod.) & 88.8 (0.1) & 58.5 (0.3) & 86.2 (0.1) & 88.9 (0.0) & 95.7 (0.1) & 96.0 (0.1) \TBstrut\\
\hline
S4 (paper results) & \it{91.1} &\it{59.6}  & \it{86.8} & \it{90.9} & \it{94.2} & \it{96.4} \TBstrut\\
S4D-LegS (paper results) & 89.9 & 60.5 & 86.2 & 89.5 & 93.1 & 91.9 \TBstrut\\
S5 (paper results) & \it{90.1} & \it{62.2}  & \it{89.3} & \it{91.4} & 95.3 & \it{98.6} \TBstrut\\
\hline
\end{tabular}
\end{sc}
\end{footnotesize}
\end{center}
\vskip -0.1in
\vspace{-1mm}
\caption{\textit{Performance after adding the $\gamma$ normalization to the diagonal RNN with stable exponential parameterization and initialization on the ring (see \S\ref{sec:norm_pathx}). For PathX, we additionally use a smaller eigenvalue phase at initialization. We name this architecture \textbf{LRU}. We sweep $r_{\min}$ and $r_{\max}$ for setting the initialization distribution and the learning rate. We also report results from S4/S4D/S5 (along with reproductions in our own pipeline with similar hyperparameter sweeps as our RNN models). LRU reaches similar performance as these deep SSMs on all LRA tasks.
}}
\vspace{-1mm}
\label{tb:effect_normalization}
\end{table*}

Since the formula above holds for every Euclidean direction in our recurrence ($\Lambda$ is diagonal), we can add a normalization parameter that is initialized element-wise. Additionally, note that as $\lambda$ approaches 1, $1-|\lambda|^2$ approaches 0, making further adaptations with SGD of this parameter hard. Therefore, we use normalization parameter $\gamma^{\log}\in\R^N$, initialized element-wise as $\gamma_i^{\log}\leftarrow \log(\sqrt{1-|\lambda_i|^2})$,\footnote{We also tried setting $\gamma_i$ to $\sqrt{1-|\lambda_i|^2}$ in each training iteration, and found it to work similarly in practice to a trainable $\gamma$.} and modify the recurrence as:
\begin{equation}
     x_{k} = \Lambda x_{k-1} +\exp(\gamma^{\log})\odot (B u_{k}),
     \label{eq:normalized_rnn}
 \end{equation}
where $\odot$ denotes the element-wise product. The $\gamma$ parameter allows the RNN to adaptively scale the input fed into the corresponding eigendirection. 
We found the $\gamma$ normalization to consistently improve performance on tasks that benefit from initializing close to the unit disk, such as sCIFAR and Pathfinder, as shown in Tb.\ref{tb:effect_normalization}.

\vspace{-3mm}
\paragraph{Reducing Eigenvalue Phase at Initialization.}
In the context of the diagonalized recurrence, we have $\Lambda = \text{diag}(\exp(-\exp(\nu^{\log}) + \theta))$, where $\nu^{\log}\in\R^N$ is the vector of log eigenvalue magnitudes and $\theta\in\R^N$ the vector of eigenvalue \textit{phases}. While $\nu^{\log}$ encodes the distance to the origin, $\theta$ is the angle from the vector $1+0i$. \textit{For long sequences}, initializing uniformly $\theta\sim[0,2\pi]$ implies that most state entries will exhibit an overall large number of oscillations at initialization, see upper panel in Fig.\ref{fig:pathx_kernel}. Equivalently, in this setting, most state dimensions are the result of \textit{convolutions}\footnote{See \citep{gu2022parameterization} for a discussion of kernel perspectives.} capturing an average of \textit{local oscillation patterns}. This behavior is independent from the ability of capturing long-range dependencies~(controlled by $\nu^{\log}$), but pertains to the nature of the information stored by the RNN. 
\begin{figure}
\centering
\vspace{-2mm}
\includegraphics[width=1.0\textwidth]{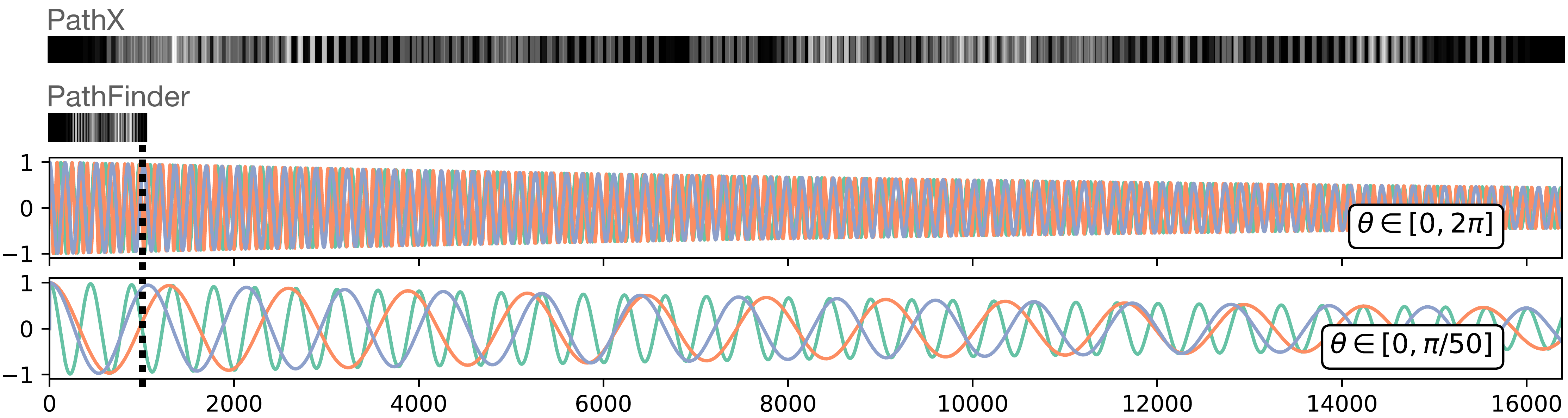}
    \vspace{-3mm}
    \caption{\textit{Evolution of $x\in\R^3$ under impulse input $u = (1, 0, 0,\dots, 0)\in\R^{16k}$. Plotted in different colors are the 3 components of $x$. $\Lambda$ has parameters $\nu_j=0.00005$ and $\theta_j$ sampled uniformly in $[0,2\pi]$ or with small phase $[0,\pi/50]$. For small sequences, such as $L=1024$~(PathFinder, sCIFAR), $[0,2\pi]$ produces kernels with acceptable overall number of oscillations: information about $u_0$ is recalled only a few times in the overall state history. Instead, for high $L$, the range of the imaginary part at initialization has to be smaller to obtain a similar effect.}}
    \label{fig:pathx_kernel}
\end{figure}
\begin{figure}
    \centering
    \includegraphics[height=0.22\textwidth]{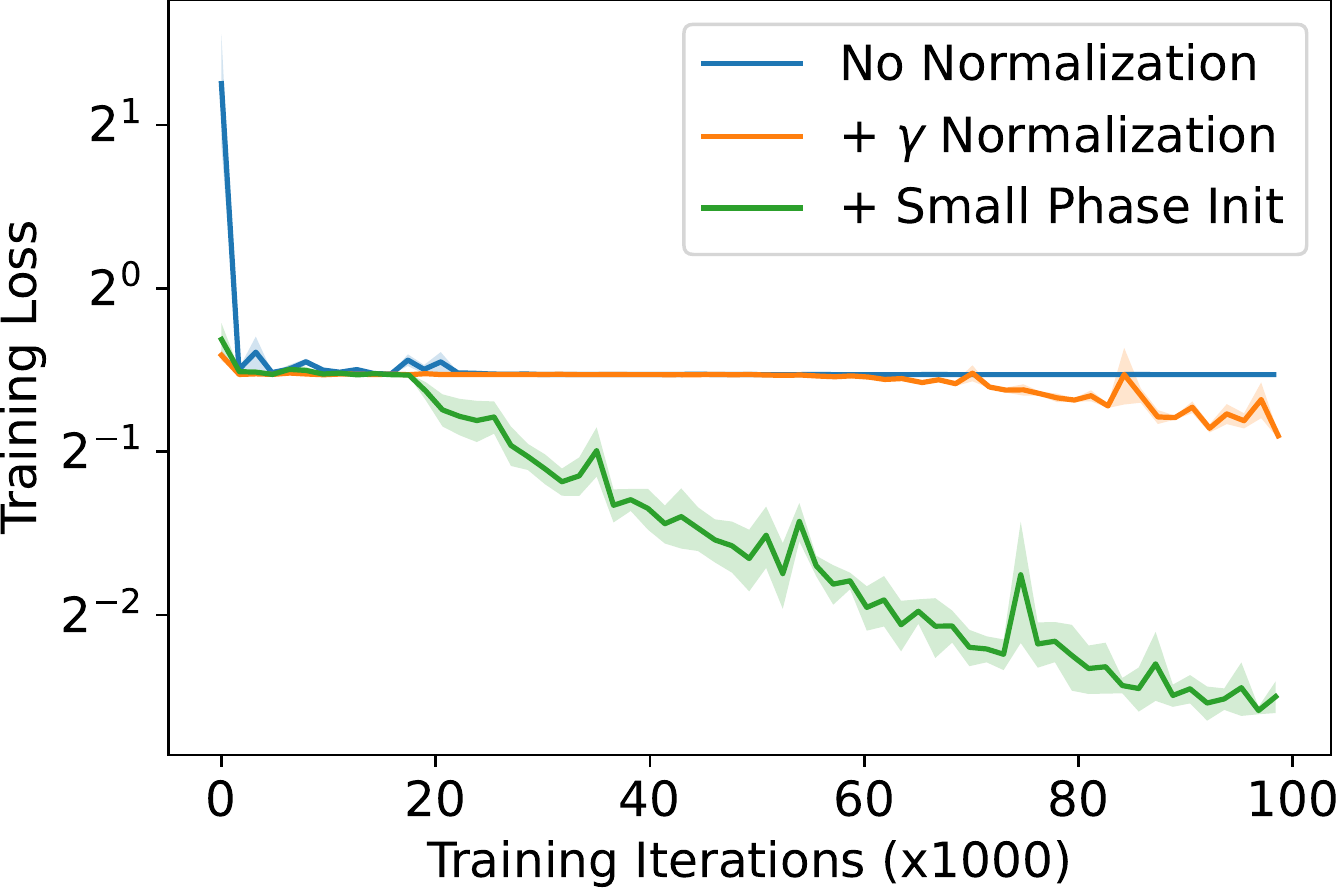}
    \includegraphics[height=0.22\textwidth]{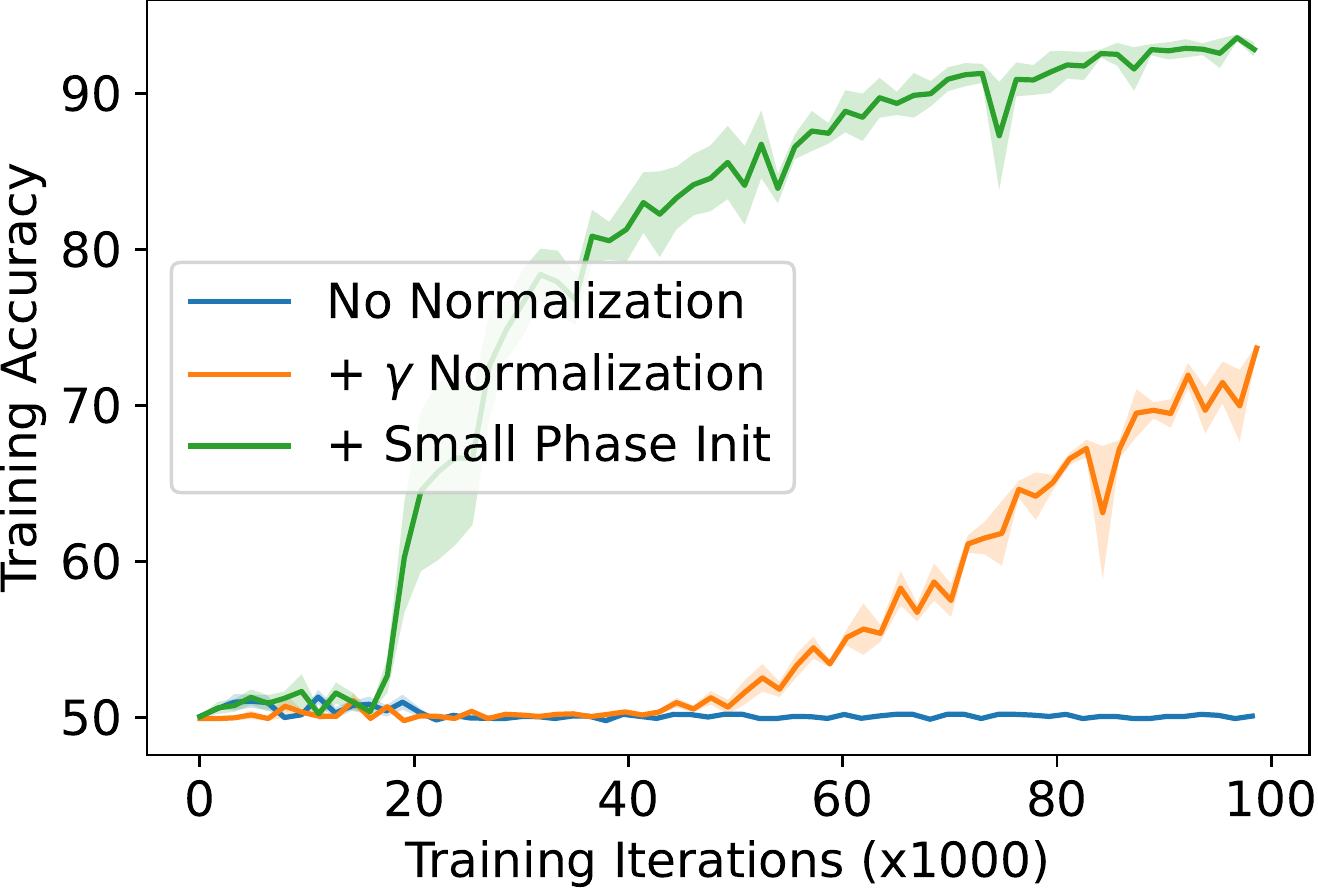}
    \includegraphics[height=0.22\textwidth]{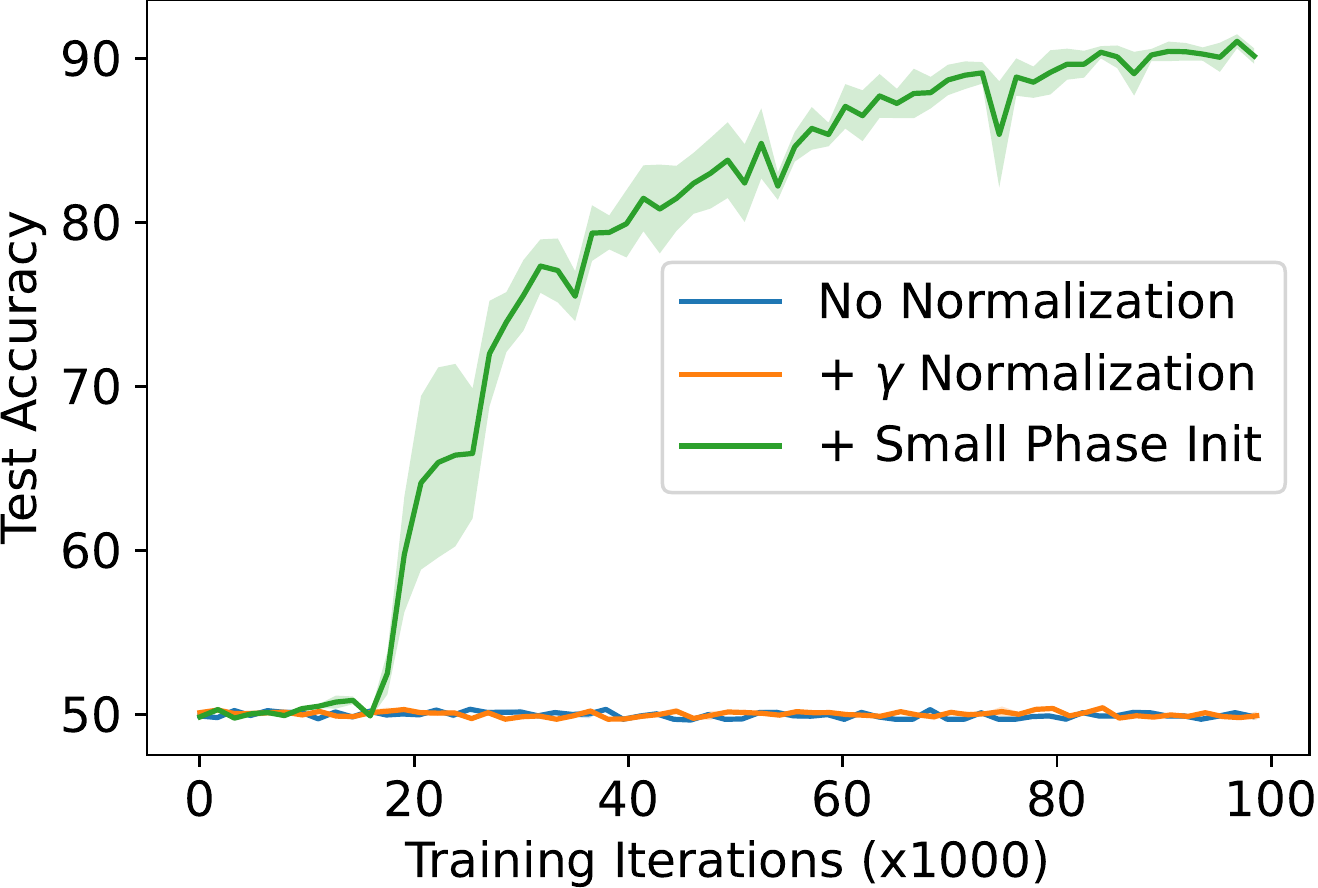}
        \vspace{-3mm}
    \caption{\textit{Effect of normalization and using a small phase at initialization on the PathX task. 
    For each setting, we show mean and standard errors over three independent runs for 100k iterations. 
    Without normalization, the model presents higher loss values at initialization and quickly converges to a suboptimal value, where train and test accuracy are both at random chance. Adding normalization helps: the train loss is lower at initialization, and the optimizer is able to escape the suboptimal region and train accuracy also increases. Interestingly, this model still fails to generalize at all. Finally, 
    reducing initialization phase~(i.e.~tuning the range of $\theta$) dramatically improves convergence on the training set, while also generalizing to the test set.}}
    \label{fig:normalization_smallphase_pathx}
    \vspace{-3mm}
\end{figure}
Therefore, we claim that initializing $\Lambda$ with uniform phase on long sequence data inherently biases the network towards learning spurious features in the input sequence. The model cannot recover from this suboptimal initialization: we indeed observe that, for our best to far model on PathX, the training loss after a few iterations converges to a highly suboptimal minimizer which leads to random chance test performance~(see Fig.\ref{fig:normalization_smallphase_pathx}). To fix this issue, we found it sufficient to restrict the range of $\theta$ to a thin slice around $0$, biasing the model towards learning more global features. 
Since the optimal values of $\theta$ are small, we parameterize the phase logarithmically: $\theta = \exp (\theta^{\log})$, where $\theta^{\log}$ is optimized, to aid optimization.

Restricting the range of the phase at initialization to be $[0, \pi/10]$, our LRU achieved \textit{$94.2\%$ on PathX}, aligning with state-of-the-art deep SSMs. We did not explore using a smaller phase at initialization for the other LRA tasks, although we believe this might further improve performance on other tasks as well. Note that using both $\gamma$ normalization and restricting the eigenvalue phase at initialization were crucial to solving PathX. We were unable to learn when using restricted phase at initialization without also introducing $\gamma$ normalization.

With all the components of \S\ref{sec:how_to} taken together, we name this new model the \textbf{Linear Recurrent Unit}~(or \textbf{LRU} for short). It provides a flexible, interpretable, and principled framework for initializing and learning deep RNNs efficiently, and matches performance and efficiency of deep SSMs across all LRA tasks as shown in Tb.\ref{tb:effect_normalization}.

\section{Insights on S4 and Variants}
\label{sec:explaining_s4}
We believe our ablations in \S\ref{sec:how_to} explain the underlying mechanisms driving the success of deep SSMs. Hence, to conclude the paper, in this section, we inspect in detail the main similarities and differences between our LRU model and diagonal SSMs, and elaborate a few insights. As in \S\ref{sec:preliminaries}, to avoid technicalities, we provide a simplified discussion capturing the main features of models stemming from the original S4 paper. For a comparison of different models, we defer the reader to \S\ref{sec:related}.

As detailed in \S\ref{sec:preliminaries}, diagonal SSMs~(DSS, S4D, S5) are instantiated and parameterized through \textit{discretization} of a latent continuous-time model $\dot x_{\text{ct}}(t)  = \tilde Ax_{\text{ct}}(t) + \tilde B u_{\text{ct}}(t)$, where $A = \diag(\tilde a)$ is initialized with complex entries, often prescribed or inspired by HiPPO theory~\citep{gu2020hippo}. Zero-Order-Hold~(ZOH) discretization with stepsize $\Delta$ leads to the recurrence $x_k = \exp(\Delta \tilde A) x_{k-1} +(\exp(\Delta \tilde A)-I)\tilde A^{-1}\tilde B u_k$. This formula, while arguably complex compared to our Eq.\eqref{eq:normalized_rnn}, relates to it as outlined in the next paragraphs.

\vspace{-3mm}

\paragraph{Matrix exponentials make training easier.}  The exponential in the ZOH formula is due to exact integration of $\dot x_{\text{ct}}(t) = \tilde Ax_{\text{ct}}(t)$, which leads to $x_{\text{ct}}(\Delta k) = \exp(\Delta \tilde A)x_{\text{ct}}(\Delta (k-1))$. In addition, to enforce stability, in models inspired by S4 the real part of $A$ is often fed into a positive nonlinearity, as we also do in \S\ref{sec:exponential}. From our results \S\ref{sec:exponential} and our discussion on optimization advantages~(see also \S\ref{app:opt}), we claim that the power of exponential parameterization is not necessarily attributable to accurate integration~(which is not present in our system), but is more fundamentally rooted in a magnitude-phase decoupling on the recurrence~(this makes training with Adam easier, see Fig.\ref{fig:learning_powers}), as well as in the overall advantage of learning in diagonalized space~(see Tb.\ref{tb:effect_parametrization_exp}). We also note that stabilizing the recurrence by adding a nonlinearity was beneficial also in our experiments, although this is not prescribed by the theory underlying S4.

\vspace{-3mm}

\paragraph{Structured initialization is not necessary.} While~\citet{gu2022parameterization,gupta2022simplifying, smith2022simplified} also discuss initializations for $A$ deviating from the HiPPO structure~(see~\S\ref{sec:preliminaries} and \S\ref{sec:related}), to the best of our knowledge we are the first to show that simple uniform initialization on a slice of the unit disk, combined with proper normalization, is able to also solve the hardest task in LRA: PathX.\footnote{Among the models in~\citep{gu2022parameterization}, only S4D-inv and S4D-LegS~(options heavily inspired by the HiPPO theory) perform beyond random guessing on PathX. In S5, the skew-symmetric component of the HiPPO matrix is used for initialization.} We also show~(Tb.\ref{tb:effect_parametrization_exp}) that uniform initialization on the disk, which is simply the diagonalized version of Glorot initialization~(Thm.~\ref{thm:ginibre}), is sufficient to achieve performance close to more complex deep state-space models on the remaining LRA tasks. Our results ultimately suggest that HiPPO theory, while fundamental for the development of this field, should not be thought of as the main source of S4 success.

\vspace{-3mm}
\paragraph{Discretization changes initialization spectrum.} For simplicity, let us restrict our attention to S4D-Lin, for which $A = \diag(\tilde a)$ with $\tilde a_n = -\frac{1}{2}+ i\pi n$, yielding a diagonal transition matrix with elements~(i.e. eigenvalues) initialized at $\exp(-\Delta/2 + i\pi \Delta n)$. Under typical choices e.g. $\Delta = 1e{-3}, N = 128$, the SSM eigenvalues have magnitude $\exp(-\Delta/2)\approx 0.9995$, and phase $\theta = \pi \Delta n\overset{\sim}{\in}[0, \pi/8]$ --- i.e. initialization is performed on a ring\footnote{For all diagonal SSMs, $\Delta$ is actually a vector initialized in the range $[\Delta_{\min},\Delta_{\max}]$. This interval can be directly mapped through the exponential map to a ring in complex space~(see Lemma~\ref{lemma:sampling_exp}).} close to the unit circle in $\Cmp$, with restricted phase connected to the eigenvalues magnitude. As is clear from the results in \S\ref{sec:exponential} and \S\ref{sec:norm_pathx}, linking the eigenvalues phase and magnitude is not necessary to achieve good performance: indeed, as it can be seen in Tb.\ref{tb:effect_normalization}, test accuracy on the Long Range Arena~(except PathX) can be recovered by using a more natural magnitude-independent initialization on the complete ring. As we discussed in ~\S\ref{sec:norm_pathx}, changing the initialization phase to a small range around $0$ can be motivated by first principles, yet is only needed for extremely long sequences: this modification is already hard-coded in S4, where choosing a small $\Delta$ also shrinks the phase.\footnote{This is a useful effect of having a latent continuous-time model: choosing eigenvalues close to the unit circle~(i.e. small $\Delta$) changes the oscillation frequencies in the discretized system.} However, our results clearly show that connecting real and imaginary parts during training through the $\Delta$ parameter is not necessary to achieve good performance, even on PathX.

\vspace{-3mm}

\paragraph{Discretization performs normalization.} The most striking visual difference between ours and ZOH-discretized S4 recurrence is in the matrix multiplier for $u_k$: $(\exp(\Delta \tilde A)-I)\tilde A^{-1}\tilde B$. After conducting experiments on S4D, we found that simply replacing this multiplier with its first-order expansion in $\Delta$, i.e. $\Delta \tilde B$, yields a close match in performance. For input dimension $H=1$ and unit $B\in\R^{N\times 1}$~(to keep reasoning simple), the corresponding recurrence is $x_k = \exp(\Delta \tilde a) + \Delta 1_{N} u_k$. Elementwise unrolling of this recurrence -- without the $\Delta$ in front of $u$ -- yields $|x_{k,i}| \le \sum_{j=0}^{k-1}|\exp(\Delta \tilde a_i)|^j u_{k-j,i}$, which in the limit $k\to\infty$ gives $O(\Delta^{-1})$. Therefore, the $\Delta$ multiplier in front of $B$ effectively scales the recurrence to avoid blow-ups --- similar to our $\gamma$ normalization factor.
\vspace{-4mm}

\paragraph{Parameter sharing is not necessary.} As a result of discretization, the $\Delta$ parameter multiplying both $\tilde A$ and $\tilde B$ couples the recurrence formula with the input projection during training. In our S4 ablations, we found that decoupling these in two separate parameters --- keeping the same initialization to guarantee no blow-ups~(see last paragraph) --- does not decrease performance, suggesting that the ODE discretization viewpoint (which induces parameter sharing) is not necessary to achieve S4 performance.

From this discussion, we conclude that the success of (diagonal) state-space models is attributable to the use of linear recurrences and  complex diagonal exponential matrices, combined with the normalization and initialization induced by discretization.
On the other hand, other artifacts of discretization such as parameter sharing or the continuous-time interpretation do not necessarily contribute to its performance.

\vspace{-2mm}
\section{Conclusion}
In this paper, we introduce a new RNN layer called the Linear Recurrent Unit or LRU and show how it can be effectively and efficiently used as core layers of deep sequence models.
We provide theoretical insights and extensive ablations on a series of step-by-step modifications of a vanilla RNN---linearization, diagonalization, stable exponential parameterization and normalization---that substantially improve performance, especially on tasks requiring long range reasoning.
While our recurrence shares similarities with modern deep SSMs, our design does not rely on discretization of a latent continous-time system or on structured transition matrices. Instead our improvements directly follow from initialization and forward pass analysis arguments standard in the deep learning community, starting from a Glorot-initialized RNNs. Our final model matches the performance of modern deep state-space models~(e.g. S4 or S5) on all LRA tasks. 

\section*{Acknowledgements}
The authors would like to thank Michalis Titsias, Aleksandar Botev, James Martens and Yee Whye Teh for the interesting discussions and perspectives on our work.


\bibliographystyle{abbrvnat}







\newpage
\bibliography{main}

\newpage
\section*{Supplementary Materials}
\appendix

\section{Simplified Implementation of the Linear Recurrent Unit}
\label{app:pseudocode}
We present here a simplified JAX implementation \citep{bradbury2018jax} of the Linear Recurrent Unit (LRU). The state of the LRU is driven by the input $(u_k)_{k=1}^L$ of sequence length $L$ according to the following formula (and efficiently parallelized using an associative scan): $x_{k} = \Lambda x_{k-1} +\exp(\gamma^{\log})\odot (B u_{k})$, and the output is computed at each timestamp $k$ as follows: $y_k = C x_k + D u_k$. In our code, $B,C$ follow Glorot initialization, with $B$ scaled additionally by a factor 2 to account for halving the state variance by taking the real part of the output projection.  $D$ is random $H$-dimensional and mutiplies elementwise each $u_k$, where $k$ is the timestamp. $\Lambda$ is initialized with the help of Lemma~\ref{lemma:sampling_exp}, with phase potentially restricted to a thin slice~(see \S\ref{sec:norm_pathx}). 
\begin{center}
\includegraphics[width=\textwidth]{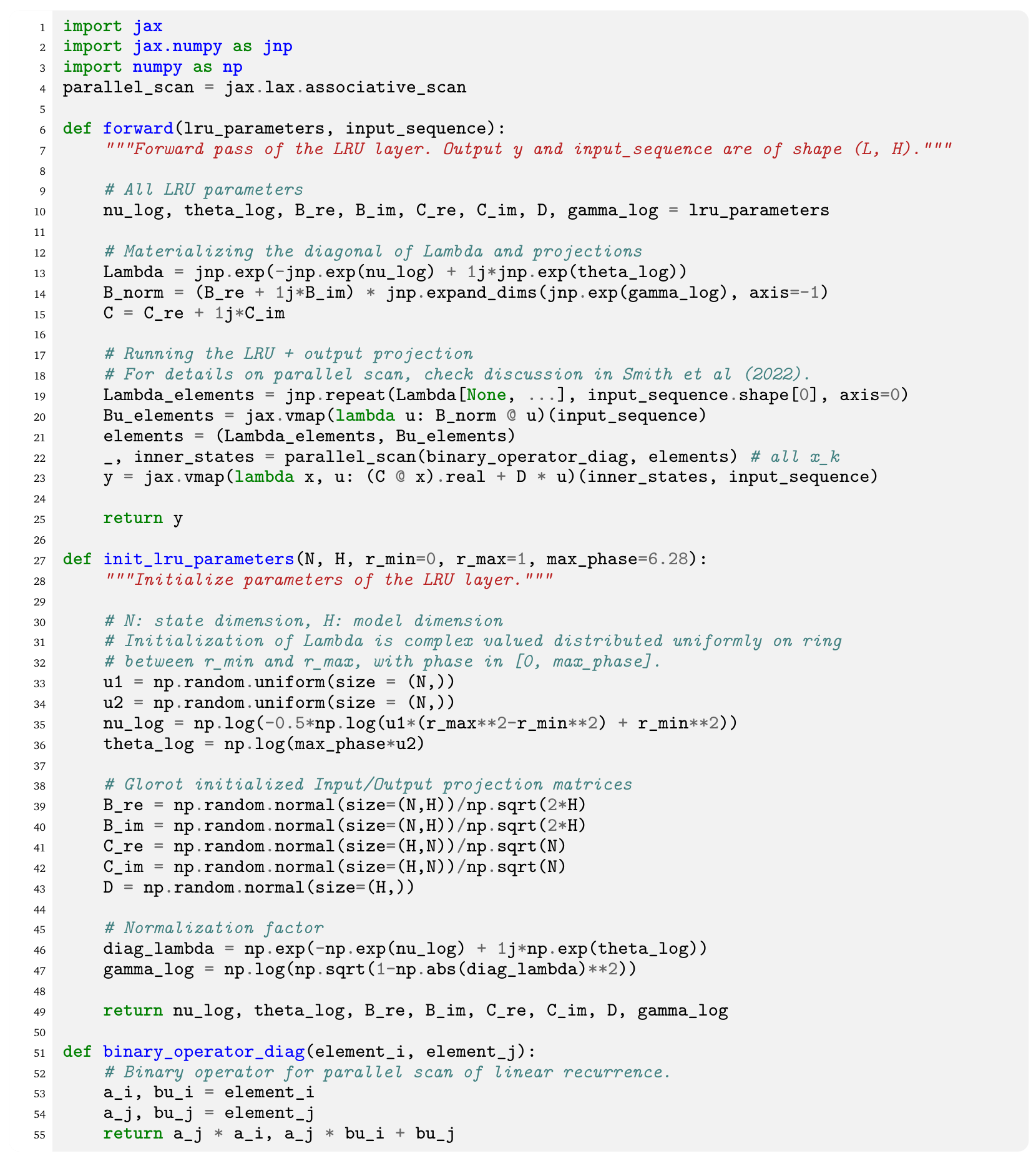}
\end{center}

\section{Related works}
\label{sec:related}

We first discuss standard RNN-based approaches for sequence-to-sequence modeling, and then provide a historical overview on the progress of the literature stemming from the S4 paper~\citep{gu2021efficiently}.

\vspace{-2mm}

\paragraph{Recurrent neural networks (RNNs).} Before the rise of transformers~\citep{vaswani2017attention}, RNNs were widely used in various applications of natural language processing tasks such as language modeling \citep{mikolov2010recurrent}, machine translation \citep{cho2014learning} and text summarization \citep{nallapati2016abstractive}. The modern RNN structure~(see Eq.\ref{eq:RNN}) is mainly attributed to the works of \citet{rumelhart1985learning}. However, it is possible to see the Hopfield Networks as a particular form of RNN \citep{hopfield1982neural}. Modern RNN formulations are also often related to the Elman Networks \citep{elman1990finding}. The issue of vanishing or exploding gradients, as described by \citet{bengio1994learning, pascanu2013difficulty}, is one barrier to training Recurrent Neural Networks~(RNNs) with gradient descent. This problem limits the ability of RNNs to learn, especially on tasks with long input sequences. One of the critical contributions to the success of RNNs was the introduction of gating mechanisms such as the Long Short-Term Memory (LSTM) proposed by the \citet{hochreiter1997long}. LSTMs address the vanishing gradients problem by introducing input, output, and forget gates, which enable the network to selectively remember or forget information from previous time steps. Another popular variant of gated RNNs is the Gated Recurrent Unit~(GRU) \citep{cho2014learning} which simplifies the LSTM architecture by merging input and forget gates into a single update gate. 

\vspace{-2mm}

\paragraph{Mitigating the vanishing gradient problem with orthogonal and unitary RNNs.}  Recently, \citet{arjovsky2016unitary} introduced unitary evolution RNNs (uRNN), where eigenvalues in the RNN transition matrix~(see Eq.~\eqref{eq:RNN}) are restricted to live on the unit circle. The induced map driving the hidden state evolution, therefore, mixes state components taking into account new inputs --- but the signal from past timestamps is not exponentially vanishing/exploding as in the vanilla RNN case~(see discussion on stability in \S\ref{sec:eig}). This idea is powerful but introduces two problems: (1) choosing unitary transitions restricts the function approximation class, and (2) training unitary matrices is expensive since a projection on the Stiefel manifold is required at each gradient step. To resolve the second issue, many works devoted attention to carefully designed reparameterization of the transition matrix as e.g., with the product of simpler matrices~\citep{arjovsky2016unitary}, Givens rotations~\citep{jing2017tunable}, Householder reflections~\citep{mhammedi2017efficient}, or as exponentials of skew-symmetric matrices~\citep{hyland2017learning, lezcano2019cheap}. The approximation capacity of these models is discussed and improved in~\citep{wisdom2016full}. A further step in designing efficient orthogonal RNNs is provided by~\citet{helfrich2018orthogonal}, who parametrized skew-symmetric matrix using the Cayley transforms, resulting in a fully real parameter space. Other works which proposed conceptually different solutions to mitigate the vanishing gradient problem include combinations with rectified linear units~\citep{le2015simple}, Lipschitz RNNs~\citep{erichson2021lipschitz}, and approaches based on dilated convolutions to increase context size~\citep{oord2016wavenet, bai2018empirical}

\vspace{-2mm}

\paragraph{Deep state-space models~(SSMs), a historical overview.} Inspired by interesting approaches involving continuous-time representation for recurrent neural networks~\citep{voelker2019legendre}, \citet{gu2020hippo} recently provided an alternative view on the vanishing gradient problem: one can design \textit{linear} continuous-time state-space models~(SSMs), of the form $\dot x(t) = Ax(t) + Bu(t)$ where the state $x(t)\in\R^N$ is guaranteed to compress all relevant~(under a certain metric) information about previously observed (one-dimensional) inputs $u([0,t])$. For instance, by using specific pair of matrices $(A\in\R^{N\times N},B\in\R^{N\times 1})$, one can discretize the continuous-time SSM above using a stable, accurate integrator~(e.g., bilinear or zero-order-hold) and retrieve the hidden state $x(t)$, which contains the coefficients for the best $N$-th degree polynomial approximation to $u([0,t])$. The idea of \citet{gu2020hippo} was to then use the resulting discretized \textit{structured}~(i.e., using structured HiPPO matrices) state-space model as a starting for the design and initialization of a novel gated RNN.

Later, \citet{gu2021efficiently} scaled up this idea into a deep architecture, where a collection~(one for each input dimension) of discretized continuous-time structured SSM was placed at each layer as a substitute\footnote{This idea is also leveraged in FNet~\citep{lee2021fnet}, where the attention mechanism is replaced with a simpler linear token-mixing strategy.} for the attention block, in an attempt to mitigate the $O(L^2)$ issue in transformers and provide a theoretically principled component for sequence-to-sequence modeling. The model reached state-of-the-art on the Long Range Arena benchmark~\citep{tay2020long}, effectively showcasing the power of discretized linear recurrences using structured transition matrices. Notably, the resulting model, named \textbf{S4}, uses a convenient and stable representation of the HiPPO transition, which is initialized using a normal + low-rank matrix and then learned efficiently in diagonal + low-rank form using fast Fourier transforms~(FFTs) and Cauchy kernels.

In the months following the publication of S4, \citet{gupta2022diagonal} noticed that most of S4 performance can be retrieved by only considering the diagonal component of the HiPPO matrix, and therefore showed the power of discretized diagonal structured continuous-time state space models. This architecture is known as \textbf{DSS}. As the interest of the community was rising, with first applications of DSS and S4 in language~\citep{mehta2022long}, vision~\citep{nguyen2022s4nd} and audio~\citep{goel2022s}, \citet{gu2022parameterization} further simplified DSS providing a diagonal form~(\textbf{S4D}) with theoretical guarantees in the infinite width setting. Notably~\citet{gu2022parameterization} showed that, to retrieve most performance of S4, one can simply initialize the transition matrix $A$ in diagonal form, with entries $a_n = -\frac{1}{2}+ i\pi n$ (S4D-Lin) or
$a_n = -\frac{1}{2}+ i\frac{N}{\pi}\left(\frac{N}{n+1}-1\right)$~(S4D-Inv). Our interest in S4-like models spiked at this point since the findings of~\citet{gu2022parameterization} suggest that, given the effectiveness of such simplified versions of $A$, the root of S4 success might be attributable to more fundamental effects are orthogonal to the HiPPO theory.

Shortly after,~\citet{smith2022simplified} found that one can also depart from the formal one-dimensional discretization structure of S4, rooted in the HiPPO theory, and considered a simplified version where all input dimensions are efficiently and simultaneously processed using parallel scans~\citep{martin2017parallelizing} --- not separately like in S4, S4D, and DSS. This model~(named \textbf{S5}) set a new state-of-the art on PathX, the hardest task in the Long Range Arena, and provides further evidence for a conceptually simpler motivation for the performance of deep state-space models. Indeed, as already mentioned, S5 is not precisely the discretization of a latent continuous-time SSM, yet still includes parameters like discretization stepsizes that have an ambiguous interpretation in this context\footnote{One can still view S5 as a discretized version of a continuous-time SSM. However, this requires adjusting the input projection matrix.}, suggesting further investigations are needed.

At the same time, a few interesting works developed novel variants of the S4 architecture. \textbf{Liquid S4} used the original (non-diagonal) S4 formulation combined with liquid time-constant networks~\citep{hasani2021liquid,hasani2022liquid}.
Similar to DSS, S4D, and S5, \textbf{Mega} also simplified S4 to a diagonal SSM \citep{ma2022mega} while showing additionally that restricting the diagonal $A$ to real numbers -- giving it an exponential moving average (EMA) interpretation -- can still work well when combined with attention and a gated block design.
Another intriguing view was provided by the \textbf{SGConv} model~\citep{li2022makes}, which leverages the convolutional interpretation of SSMs~\citep{gu2021combining} to design a purely filter-based version of S4, with no latent continuous-time model or need for discretization.

The discretization viewpoint also attracted the interest of~\citet{gupta2022simplifying}, concurrent to this work, who pointed out that, after numerical integration, diagonal state-space models and linear RNNs share the same function approximation class. \citet{gupta2022simplifying} then introduced \textbf{DLR}, most closely related to DSS and S4D (each input is processed independently at each layer) but where the discretization stepsize~$\Delta$ is absorbed into the continuous-time transition matrix $A$~(see \S\ref{sec:preliminaries}).
Their focus was on a new set of synthetic long-range tasks with strong supervision~(e.g. segmentation), while ours is on the established Long Range Arena benchmark.

To conclude, we point the reader to interesting recent applications of models inspired by the S4 architecture. In addition to earlier applications in NLP~\citep{mehta2022long}, more sophisticated architectures based on S4 recently showed great promise in language modeling~\citep{dao2022hungry, ma2022mega}. Specifically, \citet{dao2022hungry} designed a new generative language model, \textbf{H3}, that outperforms GPT-Neo-2.7B with SSMs, augmented with two attention layers. Besides language, deep state-space models were also found successful for long video/audio understanding and generation tasks~\citep{islam2022long,nguyen2022s4nd, goel2022s}, and have attracted interest in biology~\citep{bordin2022novel} and time series forecasting~\citep{zhou2022film}.

\clearpage

\section{Additional experimental results}
\label{app:additional_results}

\subsection{Training speedups}
\label{app:training_speedups}
In Tb.\ref{tb:steps_per_sec}, we show training speed comparisons of the LRU with a regular RNN with tanh activations, as well as with the S4D and S5 models. As we elaborate in \S\ref{sec:exp_setup}, for the LRU, we closely followed the optimal model sizes of the S5 model. Consequently, we also see similar training speeds as the S5 model on all tasks.

\begin{table*}[ht]
\begin{center}
\begin{scriptsize}
\begin{sc}
\begin{tabular}{|l||*{6}{c|}}
\hline
Model & sCIFAR & ListOps & Text & Retrieval & Pathfinder & PathX \TBstrut\\ \hline
Tanh RNN & 2.0       & 1.1        & 0.5        & 0.5  & 2.1 & 0.14 \TBstrut\\ 
LRU       & 15.9 (8x) & 2.1 (1.9x) & 14.7 (29x) & 5.7 (11.4x) & 15.5 (7.4x)  & 2.4 (17x)  \TBstrut\\ \hline
S4D (our reproduction) & 13.5 & 2.2 & 10.6 & 3.0 & 24.5 & 2.6 \TBstrut\\
S5  (our reproduction) & 15.9 & 2.2 & 14.4 & 5.7 & 15.6 & 2.3 \TBstrut\\ \hline
\end{tabular}
\end{sc}
\end{scriptsize}
\end{center}
\vskip -0.1in
\caption{\textit{Speeds (steps/sec) during training on a A100 GPU. We also show the speedup of the LRU over the tanh RNN for each task. The batch size used for each task is specified in Tb.\ref{tb:hparams}.}}
\label{tb:steps_per_sec}
\end{table*}

\subsection{Effect of stability and normalization}

In this section, we explore further the effect of introducing stability during training (\S\ref{sec:exponential}), as well as introducing the $\gamma$ normalization factor as shown in Eq.\eqref{eq:normalized_rnn}. To do this, we consider the sCIFAR experiment where we sweep over different settings of $r_{\max}$ and $r_{\min}$ to see the effect when initializing closer to the unit disk. We keep the learning rate fixed at 0.004 for these experiments, which we found to be optimal when initializing with $r_{\max}=1.0$ and $r_{\min}=0.0$ under a stable exponential parameterization.

We show our results in Tb.\ref{tb:effect_stability_gamma_cifar}. In the first table Tb.\ref{tb:effect_stability_gamma_cifar}(A), we show results with our baseline where we use the exponential parameterization described in \S\ref{sec:exponential}. We see that under this setting, the optimal performance is achieved when $r_{\max} = r_{\min} - 0.9$, and performance degrades as $r_{\max}$ is increased beyond 0.9.

In Tb.\ref{tb:effect_stability_gamma_cifar}(B) we show results after enforcing stability. We now notice that for each $r_{\min}$, the optimal performance is achieved by a higher $r_{\max}$ than before, i.e., training is more when initializing closer to the unit disk. Our optimal performance in this setting is achieved using $r_{\min} = 0.0$ and $r_{\max} = 0.99$. Note that even in this setting, performance can sometimes degrade when moving to even higher $r_{\max}$.

Finally, in Tb.\ref{tb:effect_stability_gamma_cifar}(C) we also incorporate the $\gamma$ normalization factor, and we now notice no degradation in performance even when $r_{\max} = 0.999$. We found training to be more stable in this setting, and our best result of 89.0\% performance is also obtained in this setting, with $r_{\min}=0.9$ and $r_{\max}=0.999$.

These ablations further motivate the benefits of enforcing stability and using the normalization parameter for better performance and more stable training, particularly when required to learn very long-range dependencies.

\begin{table}[ht]
\begin{center}
\begin{scriptsize}
\begin{sc}
        \begin{tabular}{|l||*{3}{c|}}
        \hline
          \diagbox{$r_{\max}$}{$r_{\min}$}    & $0$ & $0.5$ & $0.9$ \TBstrut\\
        \hline
        0.9 & \textbf{87.6} (0.4) & \textbf{87.8} (0.1)  & \textbf{87.9} (0.2) \TBstrut\\
        0.99 &   83.8 (0.9) & 85.8 (1.2) &  81.9 (3.8) \TBstrut\\
        0.999 &  83.9 (0.2) & 84.8 (0.4)  & 84.8 (0.8) \TBstrut\\
        \hline
        \end{tabular} \\ \vspace{1mm}
        (a) No stability.\ \\ \vspace{3mm}

        \begin{tabular}{|l||*{3}{c|}}
        \hline
          \diagbox{$r_{\max}$}{$r_{\min}$}    & $0$ & $0.5$ & $0.9$ \TBstrut\\
        \hline
        0.9 & 86.2 (0.2) & 86.6 (0.3)  & 87.3 (0.1) \TBstrut\\
        0.99 &  \textbf{87.8} (0.2) & \textbf{87.7} (0.1) & \textbf{88.1} (0.0) \TBstrut\\
        0.999 &  87.4 (0.2) & 87.4 (0.1)  & 87.5 (0.4) \TBstrut\\
        \hline
        \end{tabular} \\ \vspace{1mm}
        (b) With stability.\ \\ \vspace{3mm}
        
        \begin{tabular}{|l||*{3}{c|}}
        \hline
         \diagbox{$r_{\max}$}{$r_{\min}$}    & $0$ & $0.5$ & $0.9$ \TBstrut\\
        \hline
        0.9 & 86.4 (0.1) & 86.5 (0.1)  & 88.3 (0.1) \TBstrut\\
        0.99 &  \textbf{88.1} (0.1) & 88.4 (0.1) & \textbf{89.0} (0.2) \TBstrut\\
        0.999 &  \textbf{88.1} (0.1) & \textbf{88.6} (0.0)  & \textbf{89.0} (0.1) \TBstrut\\
        \hline
        \end{tabular} \\ \vspace{1mm}
        (c) With $\gamma$ normalization.
\end{sc}
\end{scriptsize}
\end{center}
\vspace{-3mm}
        \caption{\textit{Effect of stability and normalization and different $r_{\min}$ and $r_{\max}$ values on test accuracy for the sCIFAR10 task. Both stability and normalization allow for initializing eigenvalues closer to the unit disk, resulting in improved performance.}}
        \label{tb:effect_stability_gamma_cifar}
\end{table}

\subsection{Expanded tables}

Below we show our full results on the Long Range Arena, expanding on Tables~\ref{tb:effect_nonlinearity},~\ref{tb:effect_parametrization_exp}, and \ref{tb:effect_normalization} in the main paper. The tables are presented in logical order: in Table~\ref{tb:effect_nonlinearity_stddevs_app}, we show that vanilla~(dense) RNNs profit from dropping recurrent nonlinearities when used in the context of the architecture in Fig.~\ref{fig:deep_rnn}. Next, in Table~\ref{tb:effect_parametrization_exp_stddevs} we diagonalize our linear RNN model from \S\ref{sec:linear_rnns} and show how different parametrization for the diagonal elements affect performance. For all the rows in Table~\ref{tb:effect_parametrization_exp_stddevs}, initialization of the diagonal RNN was performed uniform on the disk, to match the random Glorot initialization of our dense version~(Thm.~\ref{thm:ginibre}).

Further, the last row in Table~\ref{tb:effect_parametrization_exp_stddevs} shows the positive effects of changing initialization distribution to a thin ring close to the circle boundary --- effectively enabling long-range reasoning through mitigation of vanishing gradients. Our settings for the ring are reported on the first row of Table~\ref{tb:effect_normalization_stddevs}. Finally, the second row of this table shows the improvements that can be achieved by including model normalization~(Eq.~\eqref{eq:normalized_rnn}), which closes the accuracy gap with deep SSMs.

\begin{table*}[ht]
\begin{center}
\begin{scriptsize}
\begin{sc}
\begin{tabular}{|l||*{6}{c|}}
\hline
Recurrence & \textbf{sCIFAR} & \textbf{ListOps} & \textbf{Text} &\textbf{Retrieval} & \textbf{Pathfinder} & \textbf{PathX} \TBstrut\\
\hline
RNN-Lin   & 72.2 (0.2) & 50.4 (0.2) & 89.1 (0.1)  & 89.1 (0.1)  & \XSolidBrush & \XSolidBrush \TBstrut\\
RNN-ReLU & 69.7 (0.2)  & 37.6 (8.0)  & 88.0 (0.1)  &  88.5 (0.1) & \XSolidBrush & \XSolidBrush \TBstrut\\
RNN-Tanh & 69.9 (0.3)  & 43.9 (0.1)  &  87.2 (0.1) & 88.9 (0.2)   & \XSolidBrush & \XSolidBrush \TBstrut\\
\hline
S4D (our reproduction) & 91.5 (0.2) & 60.2 (0.3) & 86.4 (0.0) & 89.5 (0.0) & 94.2 (0.3) & 97.5 (0.0) \TBstrut\\
S5 (our reproduction) & 88.8 (0.1) & 58.5 (0.3) & 86.2 (0.1) & 88.9 (0.0) & 95.7 (0.1) & 96.0 (0.1) \TBstrut\\
\hline
S4 (paper results)& \it{91.1} &\it{59.6}  & \it{86.8} & \it{90.9} & \it{94.2} & \it{96.4} \TBstrut\\
S4D-LegS (paper results) & 89.9 & 60.5 & 86.2 & 89.5 & 93.1 & 91.9 \TBstrut\\
S5 (paper results) & \it{90.1} & \it{62.2}  & \it{89.3} & \it{91.4} & 95.3 & \it{98.6} \TBstrut\\
\hline
\end{tabular}
\end{sc}
\end{scriptsize}
\end{center}
\vskip -0.1in
\caption{\textit{Placing a Vanilla RNN as recurrent core in the architecture of Fig.~\ref{fig:deep_rnn}. Shown is the effect of removing the RNN non-linearity on test accuracy (\S\ref{sec:linear_rnns}). }}
\vspace{-2mm}
\label{tb:effect_nonlinearity_stddevs_app}
\end{table*}

\begin{table*}[ht]

\setlength{\tabcolsep}{5pt}
\begin{center}
\begin{scriptsize}
\begin{sc}
\begin{tabular}{|l||*{6}{c|}}
\hline
 & \textbf{sCIFAR} & \textbf{ListOps} & \textbf{Text} &\textbf{Retrieval} & \textbf{Pathfinder} & \textbf{PathX} \TBstrut\\
\hline
Dense $A$ & 72.2 (0.2) & 50.4 (0.2) & 89.1 (0.1) & 89.1 (0.1) & \XSolidBrush & \XSolidBrush \TBstrut\\
$\Lambda$ Real + Im  & 86.5 (0.1) & 58.8 (0.3) & 87.4 (0.3) & 87.8 (0.5) & \XSolidBrush & \XSolidBrush \TBstrut\\
$\Lambda$ Exp & 85.4 (0.7) & 60.5 (0.3) & 86.5 (0.4) & 89.4 (0.1) & 65.4 (9.0) & \XSolidBrush \TBstrut\\
$\Lambda$ Stable Exp & 87.2 (0.4) & 59.4 (0.3) & 87.6 (0.3) & 89.1 (0.2) & 93.5 (0.5) & \XSolidBrush \TBstrut\\
+ Ring Init & 88.1 (0.0) & 59.4 (0.3) & 89.4 (0.1)  & 90.1 (0.1) & 94.4 (0.3)  & \XSolidBrush \TBstrut\\
\hline
S4D (our reproduction) & 91.5 (0.2) & 60.2 (0.3) & 86.4 (0.0) & 89.5 (0.0) & 94.2 (0.3) & 97.5 (0.0) \TBstrut\\
S5 (our reproduction) & 88.8 (0.1) & 58.5 (0.3) & 86.2 (0.1) & 88.9 (0.0) & 95.7 (0.1) & 96.0 (0.1) \TBstrut\\
\hline
S4 (paper results)& \it{91.1} &\it{59.6}  & \it{86.8} & \it{90.9} & \it{94.2} & \it{96.4} \TBstrut\\
S4D-LegS (paper results) & 89.9 & 60.5 & 86.2 & 89.5 & 93.1 & 91.9 \TBstrut\\
S5 (paper results) & \it{90.1} & \it{62.2}  & \it{89.3} & \it{91.4} & 95.3 & \it{98.6} \TBstrut\\
\hline
\end{tabular}
\end{sc}
\end{scriptsize}
\end{center}
\vskip -0.1in
\caption{\textit{Test accuracy of a linear diagonal complex RNNs under different parameterizations of the transition matrix~(see \S\ref{sec:diagonalization}). Performance directly improves the results in Tb.~\ref{tb:effect_nonlinearity}, and showcases the advantage of exponential~(polar) representation of $\Lambda$.  In bold font is the best parameterization option for linear RNN blocks. Ring Init denotes a changed initialization where $r_{\min}$ and $r_{\max}$ are tuned. Performance and Text and Retrieval task already aligns with S4 results in the dense setting~(c.f. Tb.\ref{tb:effect_nonlinearity} with Tb.~\ref{tb:effect_normalization}). No model with able to solve PathX, which requires normalization~(see Tb.\ref{tb:effect_normalization}).}}
\label{tb:effect_parametrization_exp_stddevs}
\end{table*}

\begin{table*}[ht]
\setlength{\tabcolsep}{5pt}
\begin{center}
\begin{scriptsize}
\begin{sc}
\begin{tabular}{|l||*{6}{c|}}
\hline
& \textbf{sCIFAR} & \textbf{ListOps} & \textbf{Text} &\textbf{Retrieval} & \textbf{Pathfinder} & \textbf{PathX} \TBstrut\\
\hline
Linear Dense RNN   & 72.2 (0.2) & 50.4 (0.2) & 89.1 (0.1)  & 89.1 (0.1)  & \XSolidBrush & \XSolidBrush \TBstrut\\
\hline
Diagonal Complex RNN  & 86.5 (0.1) & 58.8 (0.3) & 87.4 (0.3) & 87.8 (0.5) & \XSolidBrush & \XSolidBrush \TBstrut\\ \hline
Stable Exp Param w/ Ring Init & 88.1 (0.0)  & 59.4 (0.3) & 89.4 (0.1) & 90.1 (0.1) & 94.4 (0.3) & \XSolidBrush \TBstrut\\
$[r_{\min}, r_{\max}]$& [0.9, 0.99] & [0.0, 1.0] & [0.0, 0.9] & [0.5, 0.9] & [0.9, 0.999] &\TBstrut\\
\hline
$+\gamma$ Normalization \textbf{(LRU)} & 89.0 (0.1) & 60.2 (0.8) & 89.4 (0.1) & 89.9 (0.1) & 95.1 (0.1) & 94.2 (0.4) \TBstrut\\
$[r_{\min}, r_{\max}]$& [0.9, 0.999] & [0.0, 0.99] & [0.5, 0.9] & [0.5, 0.9] & [0.9, 0.999] & [0.999, 0.9999]\TBstrut\\
\hline
\hline
S4D (our reproduction) & 91.5 (0.2) & 60.2 (0.3) & 86.4 (0.0) & 89.5 (0.0) & 94.2 (0.3) & 97.5 (0.0) \TBstrut\\
S5 (our reproduction) & 88.8 (0.1) & 58.5 (0.3) & 86.2 (0.1) & 88.9 (0.0) & 95.7 (0.1) & 96.0 (0.1) \TBstrut\\
\hline
S4 (paper results) & \it{91.1} &\it{59.6}  & \it{86.8} & \it{90.9} & \it{94.2} & \it{96.4} \TBstrut\\
S4D-LegS (paper results) & 89.9 & 60.5 & 86.2 & 89.5 & 93.1 & 91.9 \TBstrut\\
S5 (paper results) & \it{90.1} & \it{62.2}  & \it{89.3} & \it{91.4} & 95.3 & \it{98.6} \TBstrut\\
\hline
\end{tabular}
\end{sc}
\end{scriptsize}
\end{center}
\vskip -0.1in
\caption{\textit{Effects of normalization on linear diagonal RNNs with stable exponential parameterization~(see \S\ref{sec:norm_pathx}). In bold is our best performing model, and we report the closely matching deep SSM results below. Tunings for our rings are also reported. Results showcase the advantage of taking initialization close to the unit circle under proper $\gamma$ normalization. For PathX, we initialize eigenvalues to have a phase range of $[0, \pi/10]$, for all other tasks we use a range of $[0, 2\pi]$ (see \S\ref{sec:norm_pathx}).}}
\label{tb:effect_normalization_stddevs}
\end{table*}

\section{Detailed experimental setup}
\label{app:detailed_experimental_setup}

In this section, we describe our experimental details.

\subsection{Architecture}

We consider the standard S4 architecture of \citet{gu2021efficiently} and replace the S4 layers with RNN layers or with S5 \citep{smith2022simplified} or S4D \citep{gu2022parameterization} layers for our baselines. We give an overview of the architecture used in Fig.\ref{fig:deep_rnn}. The input is first encoded into $H$ features, followed by a stack of residual blocks. For all our experiments, we use networks with a depth of 6 residual blocks. Each residual block consists of identity skip connection, and the residual path containing a normalization layer (in our case, we always use batch normalization in our experiments), followed by the RNN/SSM block. While using the ``post-norm'' option of adding the normalization layer after the skip and residual branches typically improves performance, we stick to this design due to this architecture being more scalable in general \citep{de2020batch}.

Each RNN/SSM block first contains the recurrent layer as described in Eqs.\eqref{eq:RNN} and \eqref{eq:S4-disc} in \S\ref{sec:preliminaries}. This is followed by a mixing layer. For all experiments except PathX, we use the GLU activation function \citep{dauphin2017language} with dropout as the mixing layer, similar to \citet{gu2021efficiently}. For PathX, we instead use a GLU activation function without one additional linear transform; the same as used by \citet{smith2022simplified} for their experiments.

We use bidirectional models for our experiments on PathFinder and PathX, using a similar setup as \citet{gu2021efficiently}, and use unidirectional models for the rest of our experiments.

\subsection{General experimental details}

We use AdamW as our optimizer \citep{loshchilov2017decoupled}. We use warmup for the learning rate, where we start from a value of $10^{-7}$ and increase the learning rate linearly up a specified value for the first 10\% of training. This is followed by cosine annealing for the rest of training down to a value of $10^{-7}$. 

We used a smaller learning rate for the RNN/SSM parameters $A$ and $B$. When using normalization in our RNNs, we also used a smaller learning rate on the normalization parameter $\gamma$. For our S5 and S4D baselines, we used a smaller learning rate for the discretization step size $\Delta$. This smaller learning rate was determined by multiplying the base learning rate by a factor $<1$ (See Tb.\ref{tb:hparams} for the learning rate factor used for each task).

We use weight decay for all parameters except the RNN/SSM parameters $A$ and $B$ (and $\gamma$ and $\Delta$ when applicable).

All experiments were carried out on accelerated hardware A100 GPUs.

\begin{table*}[ht]
\begin{center}
\begin{scriptsize}
\begin{sc}
\begin{tabular}{|l||*{8}{c|}}
\hline
Task & Depth & $H$& $N$ & Iterations & Batch size & LR factor & Weight Decay & Dropout \TBstrut\\ \hline
sCIFAR & 6 & 512 & 384 & 180k & 50 & 0.25 & 0.05 & 0.1 \TBstrut\\ \hline
ListOps & 6 & 128 & 256 & 80k & 32 & 0.5 & 0.05 & 0.0 \TBstrut\\ \hline
Text & 6 & 256 & 192 & 50k & 32 & 0.1 & 0.05 & 0.1 \TBstrut\\ \hline
Retrieval & 6 & 128 & 256 & 100k& 64 & 0.5 & 0.05 & 0.1 \TBstrut\\ \hline
PathFinder & 6 & 192 & 256 &  500k& 64 & 0.25 & 0.05 & 0.0 \TBstrut\\ \hline
PathX & 6 & 128 & 256 & 250k & 32 & 0.25 & 0.05 & 0.0 \TBstrut\\ \hline
\end{tabular}
\end{sc}
\end{scriptsize}
\end{center}
\vskip -0.1in
\caption{\textit{List of all the hyper-parameters used for each task for the LRU model.}}
\label{tb:hparams}
\end{table*}

\subsection{Hyperparameters}
\label{sec:hyperparams}
We closely followed the hyperparameter settings of the S5 model \citet{smith2022simplified} for all our experiments, with minimal additional tuning. For our S5 baseline, we tuned the model dimension $H$ and state dimension $N$, and used the optimal values for the LRU model as well. For the S4D baseline, we also tuned $H$ and $N$. For all our experiments, we tuned the base learning rate on a logarithmic grid of 2 to choose the optimal learning rate. We present the hyperparameters we used for each LRU experiment in Tb.\ref{tb:hparams}.

\subsection{Tasks}

We use the 6 tasks in the Long Range Arena benchmark for our experiments \citep{tay2020long}, with the only difference being we use colored sCIFAR images instead of the grayscale sCIFAR images used in LRA.

\section{Theoretical insights}
\label{app:theory}
We provide here theoretical groundings for some observations made in  \S\ref{sec:how_to}. We start by showing in \S\ref{app:expressivity} that, when interleaved with MLP blocks, stacked linear RNNs can model highly nonlinear dynamical systems. We provide two separate views that justify our findings: in \S\ref{app:spectral_view}, we provide a spectral explanation, while in \S\ref{app:koopman} we present a function-space prespective. Our results, combined with the observation that nonlinear RNNs are difficult to optimize~(\S\ref{app:opt}), provide a justification for the results in Tb.~\ref{tb:effect_nonlinearity}. Next, motivated by the results in Tb.~\ref{tb:effect_normalization} we in discuss in the same section optimization of linear RNN blocks, and show that exponential reparameterization can accelerate training.

\subsection{Expressivity of linear RNN stacks}
\label{app:expressivity}
In our sequence-to-sequence setting, it is a natural to seek models which~(at least in the width limit) are able to map inputs $u$ to outputs $y$~(last layer) using a flexible nonlinear transition map $T$ learned from data. Mathematically, a fully-expressive \textit{causal} model should be able to approximate $y_k = T(u_k, u_{k-1},\dots, u_1)$, where $T$ is an arbitrary nonlinear map. 

\subsubsection{Spectral perspective}
\label{app:spectral_view}
We show in this section how interleaving linear RNNs with MLPs in a deep architecture provides a flexible and modular recipe for the approximation of nonlinear transition maps.

\vspace{-3mm}
\paragraph{Spectral limitations of linear RNNs.} It is a standard result~\citep{li2022approximation} that \textit{linear} RNNs can approximate any shift-invariant \textit{linear} map $T$. In continuous-time, on the spectral domain, this property is easier to study: let $Y(\omega)$ and $U(\omega)$ be the Fourier transforms for two continuous-time signals $u,y:\R\to\R$. If there exists a function $H:\R\to\R$ such that $Y(\omega) = H(\omega) U(\omega)$, then this can be approximated by a continuous-time linear RNN $\dot x = A x + B u$ for some coefficients $A\in\R^{N\times N}, B\in\R^{N\times 1}$, and the approximation can be made arbitrarily accurate as $N\to\infty$. However, one thing a linear RNN \textit{cannot do} is store information under frequencies which are not present in the input signal: if the input is a sine wave of a certain frequency, the output will be a scaled and shifted sine wave of the \textit{same frequency}. 

\vspace{-2mm}
\paragraph{Spectral effects of interleaving with MLPs.} In our architecture~(Fig.\ref{fig:deep_rnn}) an activation function, as well as a linear position-wise layer, is placed right after each RNN output. As can be seen in Fig.~\ref{fig:fft_after_sin}, this operation causes spectral leakage: information gets copied over different frequency components.

\begin{figure}[ht]
    \centering
    \includegraphics[height=0.27\textwidth]{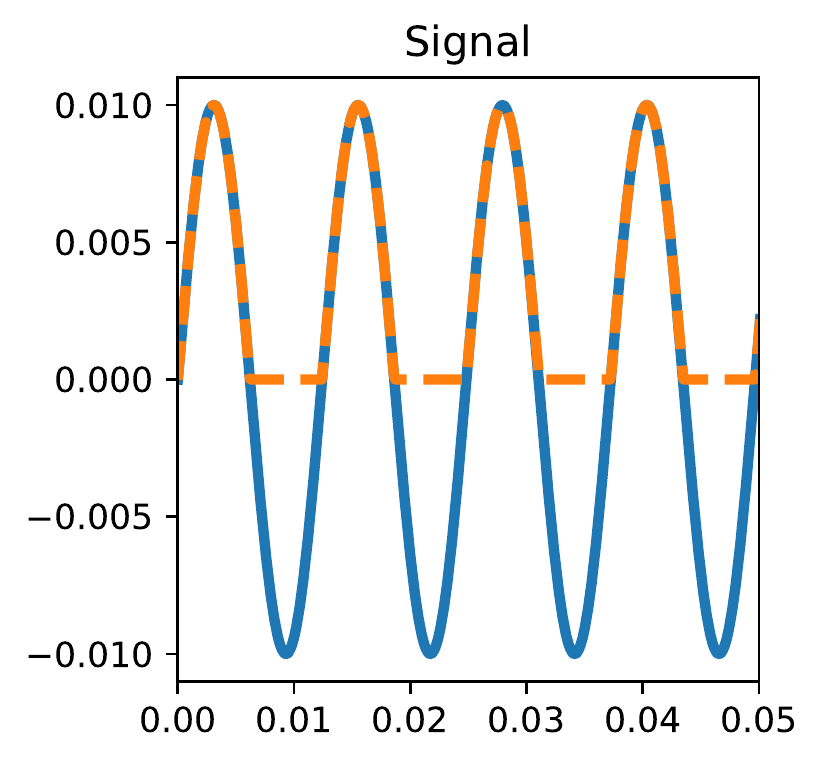}
    \includegraphics[height=0.27\textwidth]{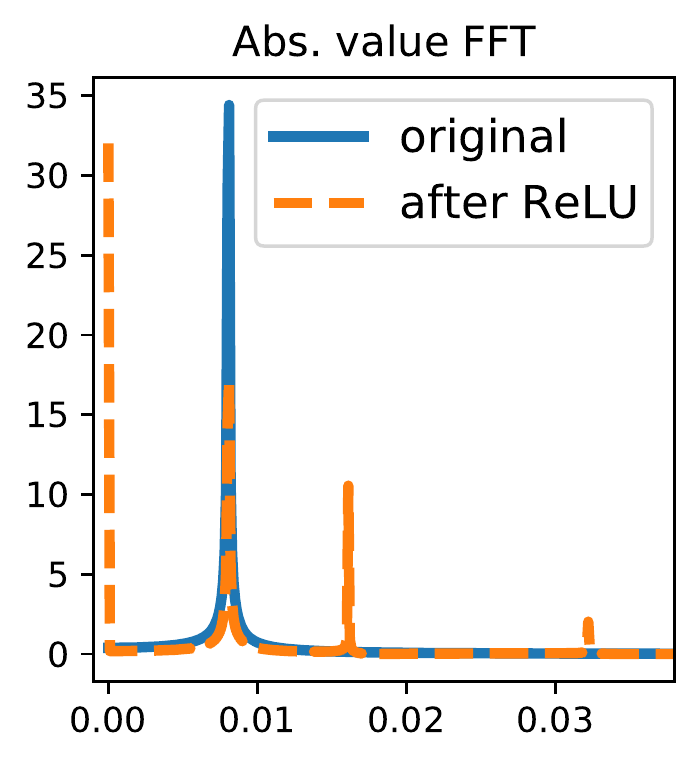}
    \caption{\textit{ReLU nonlinearity leaks information from the original signal to higher frequencies, as shown formally in Prop.~\ref{prop:relu_effect}.}}
    \label{fig:fft_after_sin}
\end{figure}
The behavior shown in Fig.~\ref{fig:fft_after_sin} can be characterized exactly:
\begin{restatable}[Spectral effect of ReLU]{prop}{spectra_relu_conv} Let $u:\R\to\R$ be a continuous-time signal. Let $P_i$ be the $i$-th region activated by the ReLU applied to $u$, and let us write $P_i = [p_i-L_i, p_i+L_i]$. Then
 \vspace{-2mm}
\begin{equation}
    \mathcal{F}_{\text{ReLU}(u)} = \mathcal{F}_{u}(\omega) \star\left[\sum_{i} 2L_i e^{-i \omega p_i} \text{sinc}(\omega L_i)\right].
\end{equation}
where $\mathcal{F}$ denotes the Fourier transform, $\star$ the convolution operation and $\text{sinc}(x):= \sin(x)/x$.
\label{prop:relu_effect}
 \end{restatable}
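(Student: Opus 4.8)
The plan is to express $\mathrm{ReLU}(u)$ as a pointwise product and then invoke the convolution theorem for Fourier transforms. The key observation is that $\mathrm{ReLU}(u)(t) = u(t)\cdot \mathbf{1}_{\{u(t) > 0\}}$, so if we write the activation set $\{t : u(t) > 0\}$ as a (countable) disjoint union of intervals $P_i = [p_i - L_i, p_i + L_i]$, then $\mathrm{ReLU}(u)(t) = u(t)\, m(t)$ where $m(t) = \sum_i \mathbf{1}_{P_i}(t)$ is the indicator (mask) of the activation region. By the convolution theorem, $\mathcal{F}_{\mathrm{ReLU}(u)} = \mathcal{F}_u \star \mathcal{F}_m$ (up to the $2\pi$ normalization convention the paper adopts), so everything reduces to computing $\mathcal{F}_m$.

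First I would compute the Fourier transform of a single centered indicator $\mathbf{1}_{[-L,L]}$, which is the standard calculation $\int_{-L}^{L} e^{-i\omega t}\, dt = \frac{2\sin(\omega L)}{\omega} = 2L\,\mathrm{sinc}(\omega L)$. Then I would apply the shift property: translating the window to be centered at $p_i$ multiplies its transform by $e^{-i\omega p_i}$, giving $\mathcal{F}_{\mathbf{1}_{P_i}}(\omega) = 2L_i e^{-i\omega p_i}\,\mathrm{sinc}(\omega L_i)$. Summing over $i$ by linearity of the Fourier transform yields $\mathcal{F}_m(\omega) = \sum_i 2L_i e^{-i\omega p_i}\,\mathrm{sinc}(\omega L_i)$, and substituting into the convolution identity gives exactly the claimed formula.

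The main obstacle is analytic rather than algebraic: one must justify that the activation set decomposes into the stated collection of intervals and that the interchange of the (possibly infinite) sum with the Fourier transform and the convolution is legitimate. For a generic continuous signal $u$ the set $\{u > 0\}$ is open, hence a countable union of disjoint open intervals, and on a bounded time horizon (or under mild decay/integrability assumptions on $u$) only finitely many contribute nontrivially, so the sum is effectively finite and the interchange is unproblematic; I would state these as the standing regularity assumptions. A secondary, purely cosmetic point is bookkeeping the Fourier-transform normalization constant so that the convolution theorem appears without a stray $1/(2\pi)$ factor; this is absorbed into the convention and does not affect the structure of the result.
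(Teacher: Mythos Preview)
Your proposal is correct and follows essentially the same route as the paper: write $\mathrm{ReLU}(u)=u\cdot\chi_{\{u>0\}}$, apply the convolution theorem, and compute the Fourier transform of the mask as a sum of shifted rectangular pulses via the shift property and the standard $\mathrm{sinc}$ formula. The only difference is cosmetic---the paper rederives the convolution theorem from scratch rather than citing it---and your added remarks on regularity (openness of $\{u>0\}$, finiteness of the sum) are extra care not present in the original.
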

 This result is simple to parse: the Fourier transform of a ReLU activated signal is equal to the Fourier transform before the ReLU, convolved with a kernel which transports information to higher frequencies --- an operation which is \textit{impossible} for linear RNNs, even as the width increases. As such, \textbf{introducing an MLP completes the list of requirements for approximations of a nonlinear transition map: frequencies can be scaled up and down arbitrarily by the RNN, and can then be translated in the space using the ReLU}. As depth increases, these operations can be combined in a modular fashion, leading to highly nonlinear dynamics using easy-to-learn linear blocks, interleaved with simple activations. 
 
 To conclude, we provide a proof for the proposition above.
 
 \begin{proof}
 Recall that multiplications in the time domain are convolutions in the frequency domain.
\begin{align}
    u_1(t) \cdot u_2(t) &= \mathcal{F}^{-1}_{U_1}(t)\cdot\mathcal{F}^{-1}_{U_2}(t)\\
    &=  \left(\int_{-\infty}^{\infty} U_1(\nu)e^{i \nu t}d\nu\right)\cdot\left(\int_{-\infty}^{\infty} U_2(\xi)e^{i\xi t}d\xi\right)\\
    &=  \int_{-\infty}^{\infty} U_1(\nu)\left(\int_{-\infty}^{\infty} U_2(\xi)e^{i (\xi+\nu) t}d\xi\right)d\nu\\
    &=  \int_{-\infty}^{\infty} U_1(\nu)\left(\int_{-\infty}^{\infty} U_2(\omega-\nu)e^{i \omega t}d \omega\right)d\nu\\
    &=  \int_{-\infty}^{\infty} \left(\int_{-\infty}^{\infty} U_1(\nu) U_2(\omega-\nu)d\nu\right)e^{i \omega t}d \omega\\
    &= \mathcal{F}^{-1}_{U_1\star U_2}(t).
\end{align}

Let now $u_1=u$ and $u_2 = \chi(u_1>0)$, then $u_1\cdot u_2 = \text{ReLU}(u)$. Next, let $P_i$ be the $i$-th region activated by the ReLU, and let us write $P_i = [p_i-L_i, p_i+L_i]$. We can write $\chi(u_1>0) = \sum_{i} \chi_{[p_i-L_i, p_i+L_i]}$.

Recall now the following basic properties:
\begin{enumerate}
    \item $\mathcal{F}_{x(t-t_0)}(\omega) = e^{-i\omega t_0} \mathcal{F}_{x(t)}(\omega)$.
    \item The Fourier transform of a rectangular pulse between $-\tau$ and $\tau$ is $2\tau\cdot \text{sinc}(\omega\tau)$, where $\text{sinc}(x) = \sin(x)/x$.
\end{enumerate}
Therefore, we have
\begin{equation}
    \mathcal{F}_{\chi_{[p_i-L_i, p_i+L_i]}}(\omega) = e^{-i \omega p_i} \mathcal{F}_{\chi_{[-L_i,L_i]}}(\omega) = 2L_i e^{-i \omega p_i} \text{sinc}(\omega L_i).
\end{equation}
This concludes the proof:
\begin{equation}
    \mathcal{F}_{\text{ReLU}(u)} = U \star\left[\sum_{i} 2L_i e^{-i \omega p_i} \text{sinc}(\omega L_i)\right].
\end{equation}
 \end{proof}

\subsubsection{Insights from Koopman operator theory}
\label{app:koopman}
 We show how Koopman operator theory~\citep{koopman1932dynamical}, combined with recent advances in dynamic mode decomposition~\citep{schmid2010dynamic,kutz2016dynamic,williams2015data}, can provide a solid theoretical foundation for understanding the class of functions that can be approximated by linear RNNs, interleaved with MLPs. Our notation and results are based on~\citet{korda2018convergence,mauroy2020koopman}.

\paragraph{Basic theory.} Consider a discrete-time nonlinear dynamical system $x_{k+1} = S(x_k)$, where $S:\mathbb{R}^n\to\mathbb{R}^n$ is a sufficiently regular map. The Koopman operator $\mathcal{K}_S$ for the dynamical system $S$ prescribes the evolution of any observable (measurement) $f:\mathbb{R}^n\to \mathbb{C}$:
\begin{equation}
    (\mathcal{K}_S f)(x) := f(S(x)).
\end{equation}
For instance, let us consider $n=1$ and the observable $f(x) = \sin(x)$: the Koopman operator is the map that takes $\sin(\cdot) \overset{\mathcal{K}_S}{\mapsto} \sin(S(\cdot))$, i.e. \textit{advances the measurement} $f$ one step forward in time.\\ 
The crucial property of the Koopman operator is that it is\textbf{ linear} and bounded~\citep{mauroy2020koopman}: let $f_1,f_2$ be two observables, then
\begin{align}
    \mathcal{K}_S(\alpha f_1 + \beta f_2)(x) &= (\alpha f_1 + \beta f_2)(S(x))\\
    &= \alpha f_1(S(x)) + \beta f_2(S(x))\\ &= \alpha (\mathcal{K}_S f_1)(x) + \beta (\mathcal{K}_S f_2)(x). 
\end{align}
If $S$ is regular enough, i.e. if the Hilbert space of observables can be chosen such that $\mathcal{K}$ only has point spectrum, then the spectral theory of bounded linear operators in Hilbert spaces implies that $\mathcal{K}_S$ is diagonalizable --- i.e. any observable $f$ can be expanded in terms of eigenfunctions of $\mathcal{K}_S$, where the Koopman acts linearly. We recall the definition: $\phi_\lambda:\mathbb{C}^n\to\mathbb{C}$ is an eigenfunction of $\mathcal{K}_S$ with eigenvalue $\lambda\in\mathbb{C}$ if $\mathcal{K}_S\phi_\lambda = \lambda \phi_\lambda$ --- i.e if the system measured on $\phi$ evolves linearly. Since the eigenfunctions of $\mathcal{K}_S$ form a basis for $L_2$, for any observable $f:\mathbb{C}^n\to\mathbb{C}$, there exist complex numbers ${\nu_1,\nu_2,\cdots}$ such that one can write~\citep{mauroy2016global}
\begin{equation}
    \mathcal{K}_S f(x) = \mathcal{K}_S\left(\sum_{j=1}^\infty \nu_j \phi_j \right)(x) =\sum_{j=1}^\infty \lambda_k \nu_j \phi_j(x). 
\end{equation}
Since also the identity measurement map $x\mapsto x$ can be decomposed into eigenfunctions of $\mathcal{K}_S$ coordinate-wise, we have the following: assuming $x_{k+1} = S(x_k)$, with $x\in\mathbb{R}^n$, for any $k\in\mathbb{N}$ we have
\begin{equation}
    x_k = V \Lambda^k \Phi(x_0),
\end{equation}
where, with slight abuse of notation, $\Phi:\R^n\to\mathbb{C}^\infty$ is a vector of functions with the $j$ coordinate defined as $(\Phi)_j := x \mapsto \phi_j(x)$, and $V\in\mathbb{C}^{n\times\infty}$~(often named the Koopman modes matrix) is the infinite dimensional matrix such that, for the observable $f_i:x\mapsto x_i$, one has $f_i(x) = \sum_{j=1}^\infty V_{ij}\phi_j(x)$.

\paragraph{Basic Theory Summary.} In essence, Koopman operator theory, provides the following guarantee: \textbf{\textit{any sufficiently regular nonlinear autonomous dynamical system can be made linear under a high-dimensional nonlinear blow-up of the state-space. Sounds familiar? This is exactly what a wide MLP + Linear RNN can do}}. Moreover, to take the system back to the original coordinate system, one just needs a linear projection with matrix $V$. In practice, for identification and diagnosis of nonlinear systems~(e.g. in machanical engineering), this approach is used in a truncated version, where the finite class of dominant eigenfunctions is constructed by using the dynamic mode decomposition (DMD) algorithm from Hermite Polynomials~\citep{schmid2010dynamic,kaiser2021data}.

\paragraph{Extension to nonlinear systems with inputs.} Several options exist for extending Koopman operator theory to systems with inputs~\citep{surana2016koopman,proctor2018generalizing,kaiser2021data,korda2020koopman}. Here, we briefly outline the approach of~\citep{korda2020koopman}. Let $S:\R^n\times \R^m\to\R^n$ be a nonlinear function which evolves the state of the system as $x_{k+1}=S(x_k,u_k)$, where $(u_k)_{k=1}^{\infty}\in\ell_2(\R^m)$ is the input sequence. We wish to take this nonlinear dynamical system with inputs to linear form in the infinite-dimensional space of observables $f$ of the form $\R^n\times\ell_2(\R^m)\to\mathbb{C}$. Let $\mathcal{L}$ denote the left shift operator $\tilde u = (u_0, u_1,\dots)\mapsto \mathcal{L}(\tilde u) = (u_1, u_2,\dots)$, then one can define the Koopman operator for any observable $f$ as follows:
\begin{equation}
    \mathcal{K}_S f(x, \tilde u) = f(S(x, u_0), \mathcal{L}(\tilde u)).
\end{equation}
This operator is again linear and bounded for regular enough $S$~\citep{korda2020koopman} --- hence the analysis in the autonomous setting carries out also in this case. In particular, using the notation in the last paragraph:
\begin{equation}
    x_k = V \Lambda_{(x,u)}^k \Phi(x_0, \tilde u),
\end{equation}
where $\Lambda_{(x,u)}$ is a diagonal complex infinite-dimensional matrix which contains the eigenvalues corresponding to the eigenfunctions of the extended state $\Phi(x_0, \tilde u)$.

\paragraph{Implication for deep RNNs.} In essence, Koopman operator theory, provides the following guarantee:\textit{ any regular nonlinear dynamical system is representable by a linear RNN after proper nonlinear reparameterization of the inputs} --- which can be performed by an MLP. While we believe this connection is conceptually solid and gives substantial insights into our architecture, a quantitative discussion would require substantial technical efforts perhaps linked to recent contributions from the statistical learning community~\citep{kostic2022learning}.

 \subsection{Optimization of recurrent blocks}
\label{app:opt}

In this subsection we back-up some of our claims about optimization of linear RNNs with experimental findings on toy examples. Our purpose is to confirm validity of our intuition outside the deep learning setting, without architecture-dependent confounders: i.e on vanilla RNNs with one layer.

\vspace{-3mm}
 \paragraph{Recurrent nonlinearities slow down gradient descent. }In \S\ref{sec:how_to} and \S\ref{app:expressivity} we showed how linear RNNs can be used as elementary recurrent blocks for the purpose of modeling complex nonlinear dynamics when stacked in deep architectures. Similarly, the results in~\citep{li2022makes} indicate that, to achieve S4 performance, one can equivalently replace the recurrent core with a collection of convolutions parametrized by filters. While a single-layer level, a (dense) RNNs~(Eq.\ref{eq:RNN}) with $\tanh$ or sigmoid activation can express convolutions with filters~\citep{wangeffects}, the results in Tb.~\ref{tb:effect_nonlinearity}~(and Fig.~1(a) in~\citet{wangeffects}) indicate an advantage on test accuracy from dropping such nonlinearities in the recurrence --- i.e. of making the RNN linear. Motivated by this, in Fig.~\ref{fig:tanh_vs_lin_kernel_learning} we consider the problem of learning a single one-dimensional convolution kernel with a single layer RNN, and compare performance of linear and $\tanh$ activations. The sequence length in this problem was $100$, and our data consists in 32 input-output one-dimensional trajectories, where the output is the result of a convolution with the kernel of elements $h_k:= \frac{1}{10}\exp(-0.015\cdot k)\cos(0.04\cdot k)^2$, which induces moderate-length dependencies in the data~(see bump in the kernel in Figure~\ref{fig:tanh_vs_lin_kernel_learning} at $k=70$). The 32 input sequences are generated sampling random $a,c$ parameters on a range and have form $\sin(0.05\cdot a\cdot k)\cos(0.05\cdot c\cdot k)^2$. Outputs are generated by convolving each input by $h$. Learning is performed using the Adam optimizer~\citep{kingma2014adam} with standard momentum parameters.
 
 Interestingly, already on this simple task, linear RNNs outperforms the $\tanh$ variant even after careful tuning of the stepsize. While the input-output map the system had to approximate is linear~(i.e. a convolution), this result still indicates that on deep architectures, where the MLPs interleaving RNNs can quickly perform position-wise nonlinearities lifting the function approximation class~(see \S\ref{app:expressivity}), linear RNNs are preferrable.
 
 \begin{figure}[ht]
 \vspace{-2mm}
     \centering
     \hspace{1mm}
      \includegraphics[height=0.26\textwidth]{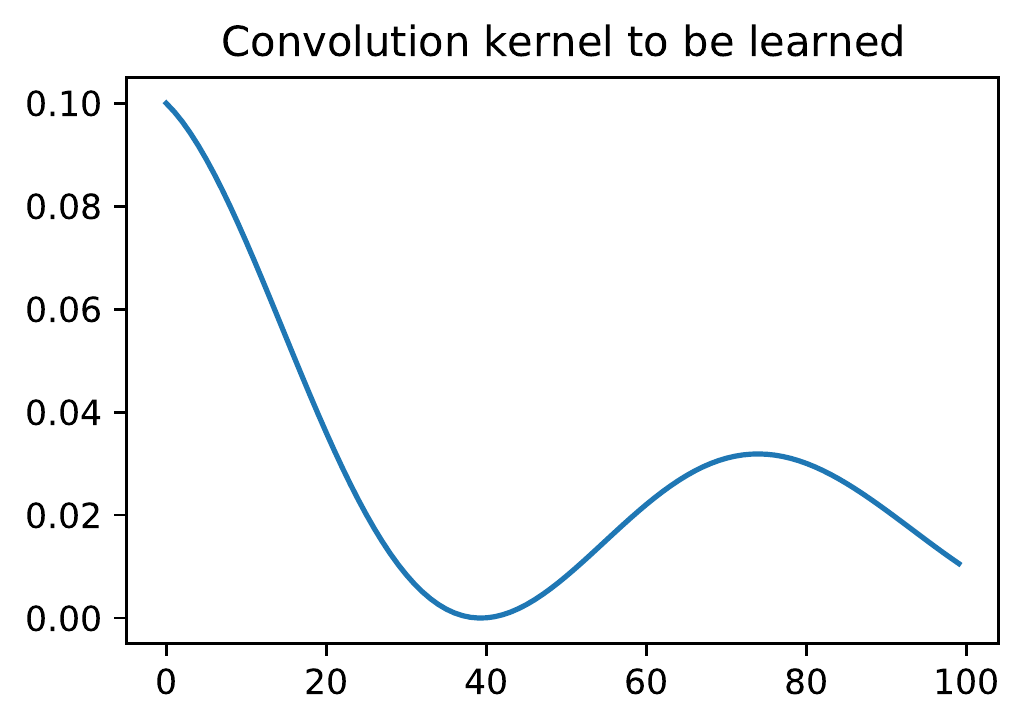}
     \includegraphics[height=0.26\textwidth]{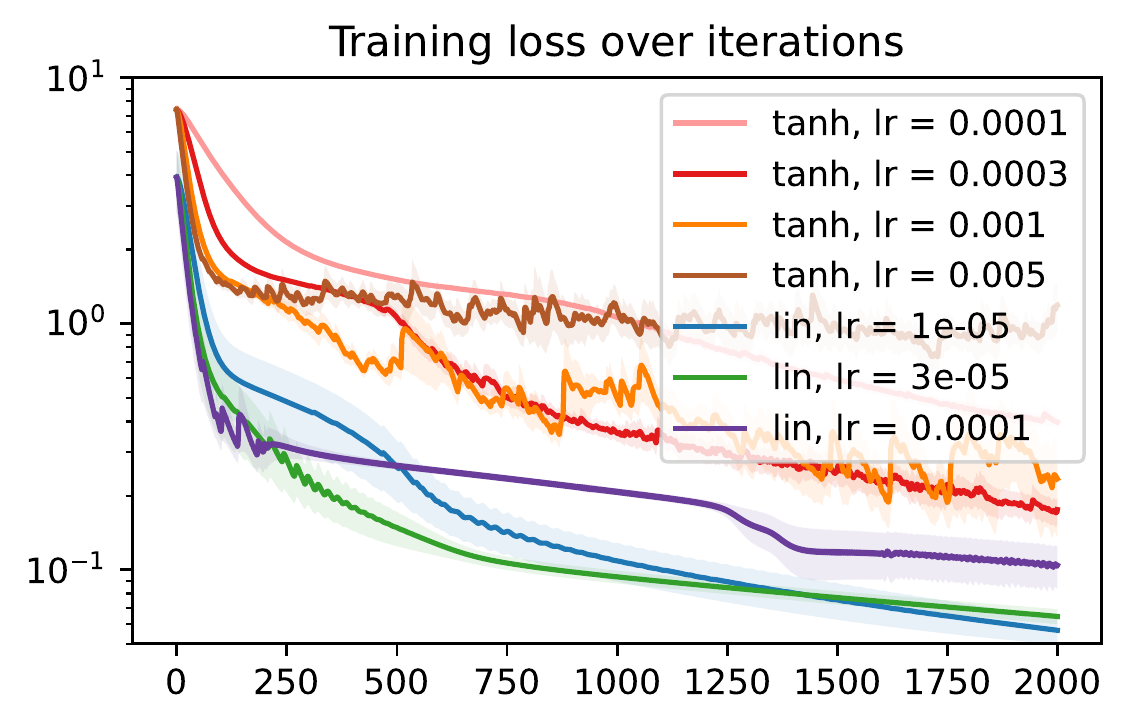}\\
     \vspace{-2mm}
     \caption{\textit{Learning with Adam a one-dimensional convolution with a length-$100$ kernel using a single-layer RNNs with linear or $\tanh$ recurrent activations and 100-dimensional hidden state. Initialization is performed using Glorot on all quantities for both options. For all learning rates in our grid, the linear variant is faster to converge.}}
     \label{fig:tanh_vs_lin_kernel_learning}
     \vspace{-3mm}
 \end{figure}
 
 \vspace{-3mm}
 \paragraph{Benefits of exponential parameterization.} Our experimental results in \S\ref{sec:exponential} indicete that linear RNN cores can be more effectively learned under exponential parameterization of the eiganvalues: $\lambda = \exp(-\nu + i\theta)$. To understand the reason behind this phenomenon, we go back at the classical (hard) problem of learning powers~\citep{bengio1994learning}, crucially linked with linear RNN models~(see Eq.~\eqref{eq:lin_rnn_unroll}). For a specific planted solution $\lambda^* = \lambda_r^* + i \lambda^*_i = \exp(-\nu^*+i\theta^*)$, we consider the problem of minimizing the loss $L(\hat\lambda) = \frac{1}{2}|\hat\lambda^k -(\lambda^*)^k|^2$, where $k=100$ and $\hat\lambda$ is generated from two real parameters following standard~( real + imaginary) or exponential parameterization. Note that in this paragraph $\lambda^*\in \Cmp$ denotes the solution, not the complex conjugate of $\lambda$. In Fig.~\ref{fig:learning_powers}, we show that as the target phase $\theta^*$ approaches $\pi/2$ (i.e. $\lambda^*$ gets close to the imaginary axis), standard parameterization slows down learning, as the corresponding landscape gets non-axis-aligned --- a feature that does not match well the inner workings of the Adam optimizer\footnote{For this problem, vanilla gradient descent cannot be effectively used as the landscape is highly non-convex, with challenging curvature vanishing as $|\lambda|$ approaces $0$. }, which is a diagonal preconditioner~\citep{kingma2014adam}. Instead, under exponential parameterization, the effects of phase and magnitude parameters on the powers of $\lambda$ are more efficiently decouped: for example, while the real part of $\lambda^k$ is simply $\exp(-k\nu)$ using exponential parameterization, if standard parameterization is used, $\text{Re}\left[\lambda^k\right]$ is a function of both $\lambda_r$ and $\lambda_i$. We noticed that the performance difference gets most pronounced when the system has to learn how to ``turn'': i.e. the initialization magnitude is correct, but the position on the complex plane is not~(this is the precise setting for Figure~\ref{fig:learning_powers}): while for standard parameterization changing the phase $\theta^*$ requires a careful balance between real and imaginary components, for exponential parameterization gradients are fully aligned with the phase parameter. This makes the learning more flexible, a feature which we observed necessary in our experiments on the Long Range Arena, see \S\ref{sec:exponential} and Tb.\ref{tb:effect_parametrization_exp}.

  \begin{figure*}
     \centering
     \includegraphics[width=0.24\textwidth]{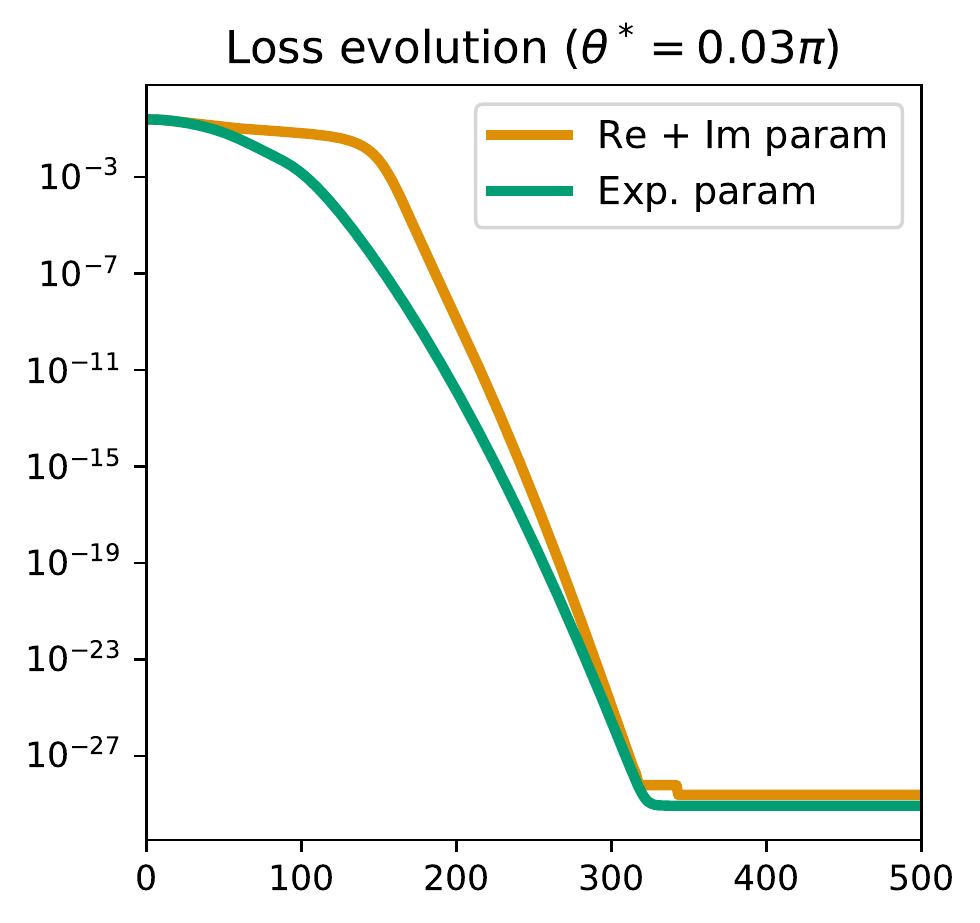}
     \includegraphics[width=0.24\textwidth]{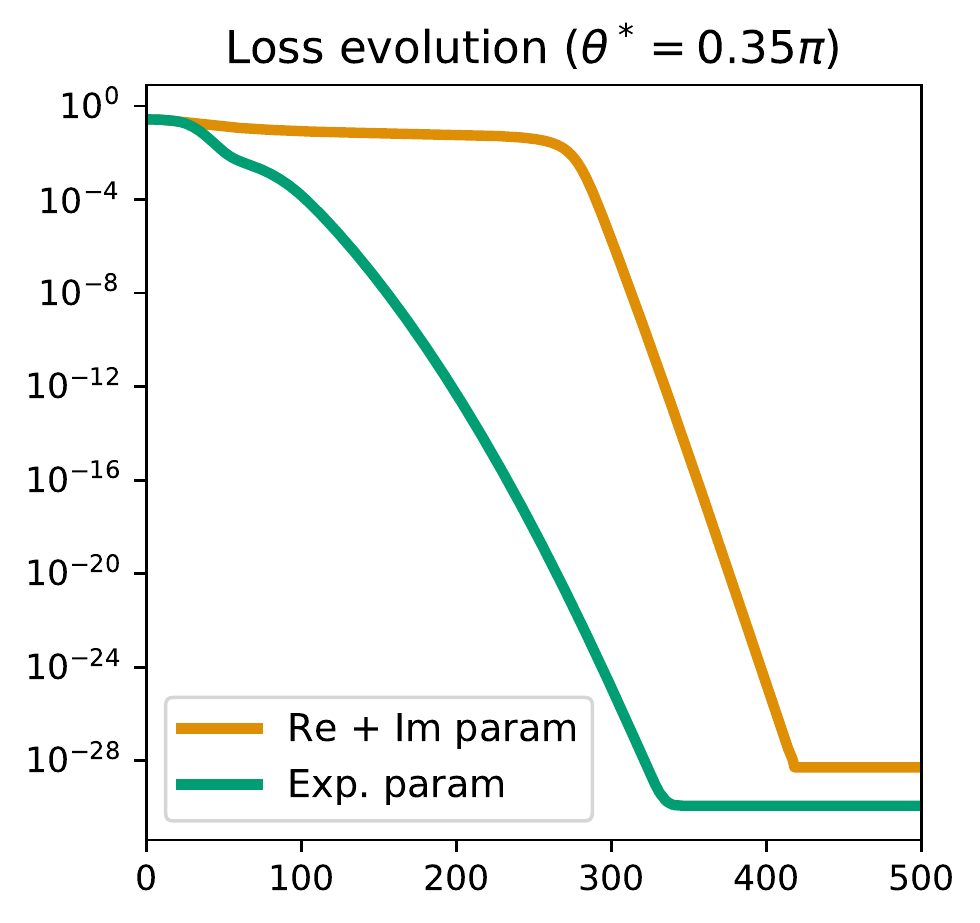}
     \includegraphics[width=0.25\textwidth]{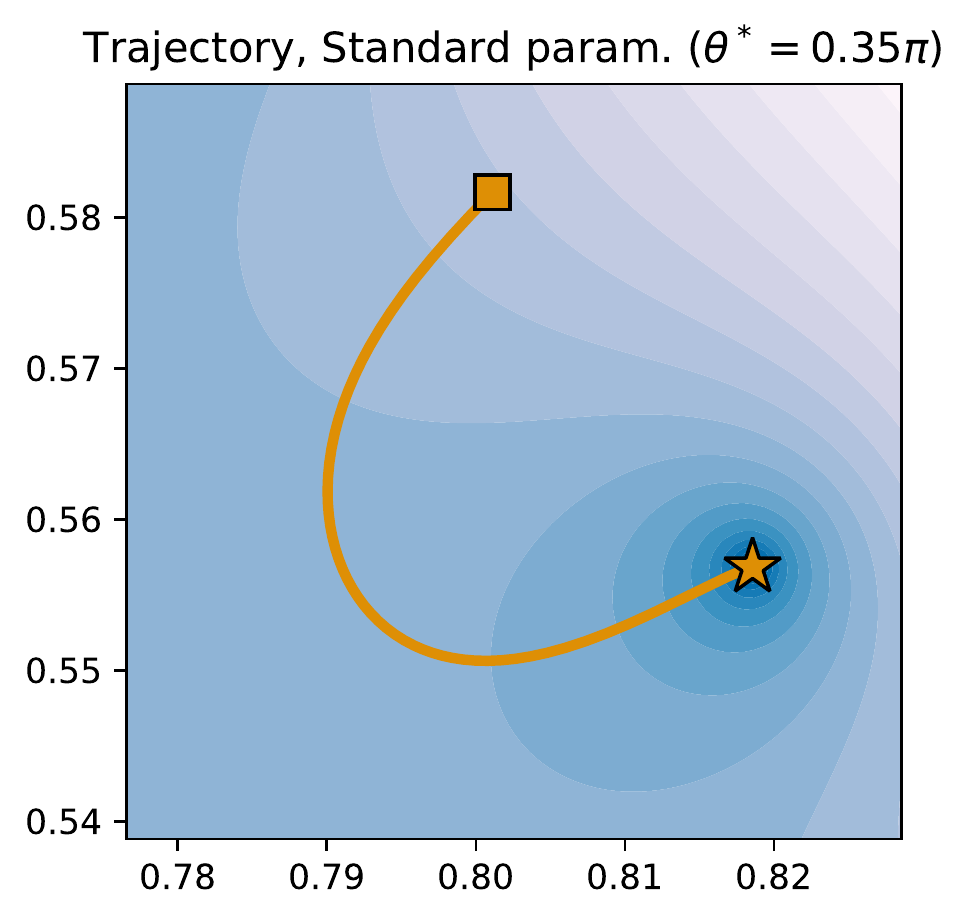}
     \includegraphics[width=0.25\textwidth]{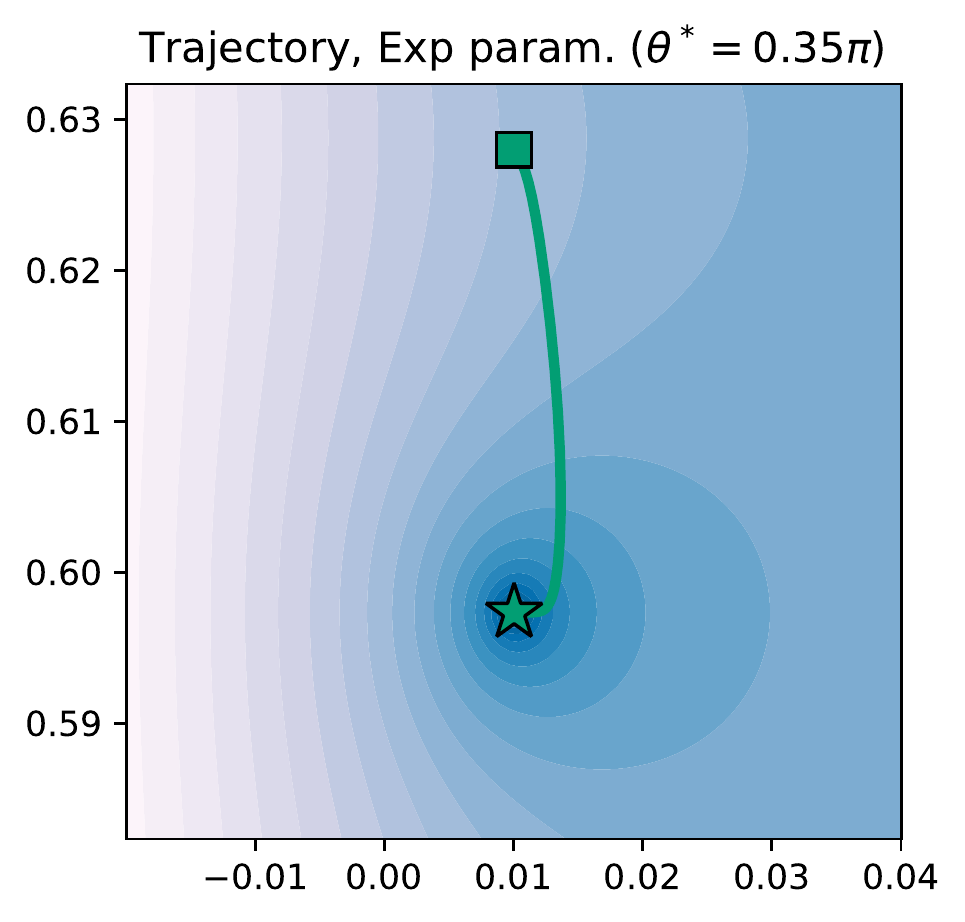}
     \caption{\textit{Exponential parametrization helps when learning a single complex eigenvalue $\lambda^* = \exp(-\nu^* + i\theta^*)$, exponentiated $100$ times. As $\lambda^*$ gets close to the purely imaginary setting $\theta^*=\pi/2$, the geometry of the loss landscape under standard real+imaginary parametrization becomes suboptimal for the Adam optimizer, which works best in the axis-aligned setting~(exponential parametrization). In the plot, the square denotes initialization , while the star denotes the solution after 500 iterations.}}
     \label{fig:learning_powers}
 \end{figure*}
 
\subsection{On alternatives to using complex numbers}
\label{app:real_jordan_form}
In this subsection, we show how to derive the canonical real form for a non-symmetric real-valued matrix $A$, which we assume to be diagonalizable in the complex domain~(always true up to arbitrary small perturbation of the entries~\citep{axler1997linear}). This derivation is classical and can be found in many textbooks under the context of \textit{real Jordan form}~(more general), see e.g.~\citet{weintraub2009jordan}. Here, we present a simplified discussion.

After diagonalizing $A$, we retrieve a set of purely real eigenvalues~(each with multiplicity 1 up to vanishing perturbations) with corresponding \textit{real} eigenvectors, and pairs of complex conjugate eigenvalues, with corresponding \textit{complex conjugate} eigenvectors.\\
We recall a proof for the facts above: let $^*$ denote the elementwise complex conjugate of any complex quantity. This operation clearly commutes with multiplication. If $\lambda\in\Cmp$ is an eigenvalue of $A\in\R^{N\times N}$ with eigenvector $v\in\Cmp^N$, then since $A$ is real-valued we have $A v^* = (A^* v)^* =(A v)^* = (\lambda v)^* = \lambda^* v^*$. Hence, $\lambda^*$ is an eigenvalue with eigenvector $v^*$. This also shows that there always does exist a real eigenvector corresponding to each real eigenvalue: let $v\in\Cmp^N$ be a complex eivengvector with real eigenvalue $\lambda$, then $v+v^*\in \R^N$ is an eigenvector with eigenvalue $\lambda$ since, again using the fact that $A$ is real,  $A(v+v^*) = Av + Av^* = Av + (Av)^* = \lambda(v+v^*)$ .

The action of $A$ on its real eigenvectors~(with real eigenvalues) is trivial and analogous to the symmetric case --- this corresponds to a diagonal entry in the diagonalized version of $A$. For the subspaces spanned by complex eigenvalues, the discussion is more interesting: let $\lambda, \lambda^*$ be a pair of conjugate eigenvalues with corresponding eigenvectors $v, v^*$. Collect $v, v^*$ in a $N\times 2$ matrix $V$, then
\begin{equation}
    A V = V\begin{pmatrix}\lambda& 0\\0&\lambda^*\end{pmatrix}  =: V \Lambda 
\end{equation}
Let us now choose a different \textit{real} basis for the columns of $V$, the real and imaginary parts of $v$: $\tilde V =[\text{Re}(v),\text{Im}(v)]$. Note that this is a basis, since $v, v^*$ are linearly independent and can be both written as (complex-weighted) linear combination of real and imaginary parts of $v$. Now note that 
\begin{align*}
    A \cdot\text{Re}(v) &= \frac{1}{2}A(v+v^*)\\
    &= \frac{1}{2} \left(\lambda v +\lambda^*v^*\right)\\
    &= \text{Re}(\lambda v) \\
    &= \text{Re}\left[(\text{Re}(\lambda) + i\text{Im}(\lambda))(\text{Re}(v) + i\text{Im}(v))\right]\\
    & = \text{Re}(\lambda)\text{Re}(v) - \text{Im}(\lambda)\text{Im}(v).
\end{align*}

Similarly,
\begin{align*}
    A \cdot\text{Im}(v) &= \frac{1}{2}A(v-v^*)\\
    &= \frac{1}{2} \left(\lambda v -\lambda^*v^*\right)\\
    &= \text{Im}(\lambda v) \\
    &= \text{Im}\left[(\text{Re}(\lambda) + i\text{Im}(\lambda))(\text{Re}(v) + i\text{Im}(v))\right]\\
    & = \text{Re}(\lambda)\text{Im}(v) + \text{Im}(\lambda)\text{Re}(v).
\end{align*}

This shows that the action of $A$ on the new \textit{real} basis $\tilde V$ is of simple form:

\begin{equation}
    A \tilde V = \tilde V\begin{pmatrix}\text{Re}(\lambda)& -\text{Im}(\lambda)\\\text{Im}(\lambda)&\text{Re}(\lambda)\end{pmatrix}  =: \tilde V \tilde \Lambda 
    \label{eq:equiv_complex}
\end{equation}

This discussion shows that there exist a simple invertible change of basis~(from $V$ to $\tilde V$ for all pairs of conjugate eigenvalues) which makes takes the system back to a simple decomposition in the real domain, both in terms of eigenvalues and eigenvectors --- one simply has to replace all diagonal blocks of form $\Lambda$ with $2\times 2$ matrices $\tilde \Lambda$.

The careful reader might recognize that, in the resulting system, matrix multiplication for the $2\times 2$ blocks is algebraically equivalent to multiplication of the corresponding complex numbers. Hence, while complex numbers are not \textit{per-se} needed to find a simple representation of non-symmetric matrices, they are convenient to work with since the matrix in Eq.~\eqref{eq:equiv_complex} is structured: has 4 entries but can be represented using just two --- real and imaginary parts, exactly what a complex number stores in memory.

\newpage
\section{Proofs}
\label{app:proofs}
In this section we provide proofs for the propositions listed in the main paper.

\subsection{Proof of Lemma~\ref{lemma:sampling_exp}}
We provide here a proof for the following sampling lemma.
\sampling*
\begin{proof}
First, note that one can sample phase and magnitude independently by symmetry of the target distribution. Phase sampling can trivially performed through scaling a uniform distribution. 

Next, we consider sampling the magnitude. The area of the ring in between $r_{\min}$ and $r_{\max}$ is $\pi(r_{\max}^2-r_{\min}^2)$, while the cumulative distribution function for the radius distribution is such that $F_r(r_{\min}) = 0$, $F_r(r_{\max}) = 1$ and for $r\in[r_{\min},r_{\max}]$ we therefore have
\begin{equation}
    F(r) = \frac{r^2-r_{\min}^2}{r_{\max}^2-r_{\min}^2}.
\end{equation}
Under parametrization of $r$ using the exponential, $r = e^{-\nu}$, one gets
\begin{equation}
    F(r) = \frac{e^{-2\nu}-r_{\min}^2}{r_{\max}^2-r_{\min}^2}.
\end{equation}
Finally, we use the inverse sampling theorem~(see e.g. \citet{vogel2002computational}): one can sample $\nu$ using the formula $\nu = F^{-1}(u)$, where $u$ is uniform on $[0,1]$. By setting
\begin{equation}
    u = \frac{e^{-2\nu}-r_{\min}^2}{r_{\max}^2-r_{\min}^2},
\end{equation}
we get
\begin{equation}
    e^{-2\nu} = (r_{\max}^2-r_{\min}^2) u + r_{\min}^2,
\end{equation}
from which follows that $\nu = -\frac{1}{2}\log((r_{\max}^2-r_{\min}^2) u + r_{\min}^2)$.
\end{proof}

\subsection{Proof of Proposition~\ref{prop:r_min_max_effect}}
\label{app:normalization}

Validity of this proposition is verified numerically in Figure~\ref{fig:verification_gain_prop}.

\begin{figure}[ht]
    \centering
    \includegraphics[width=0.4\textwidth]{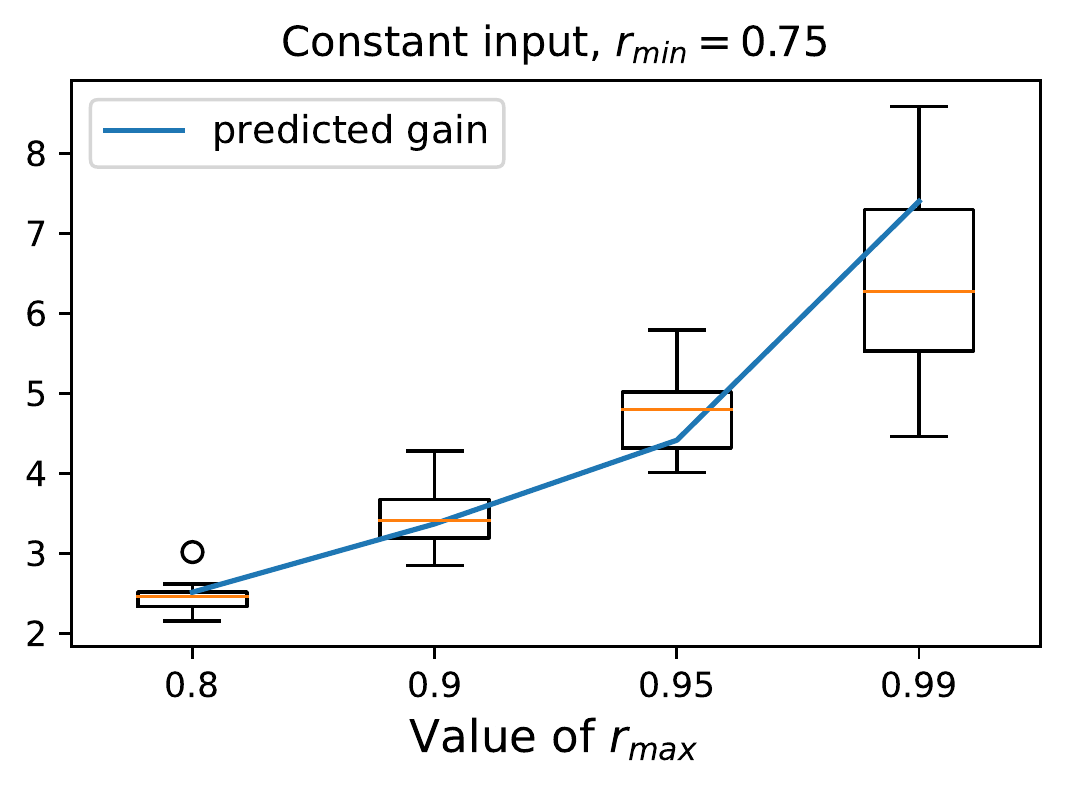}
    \includegraphics[width=0.4\textwidth]{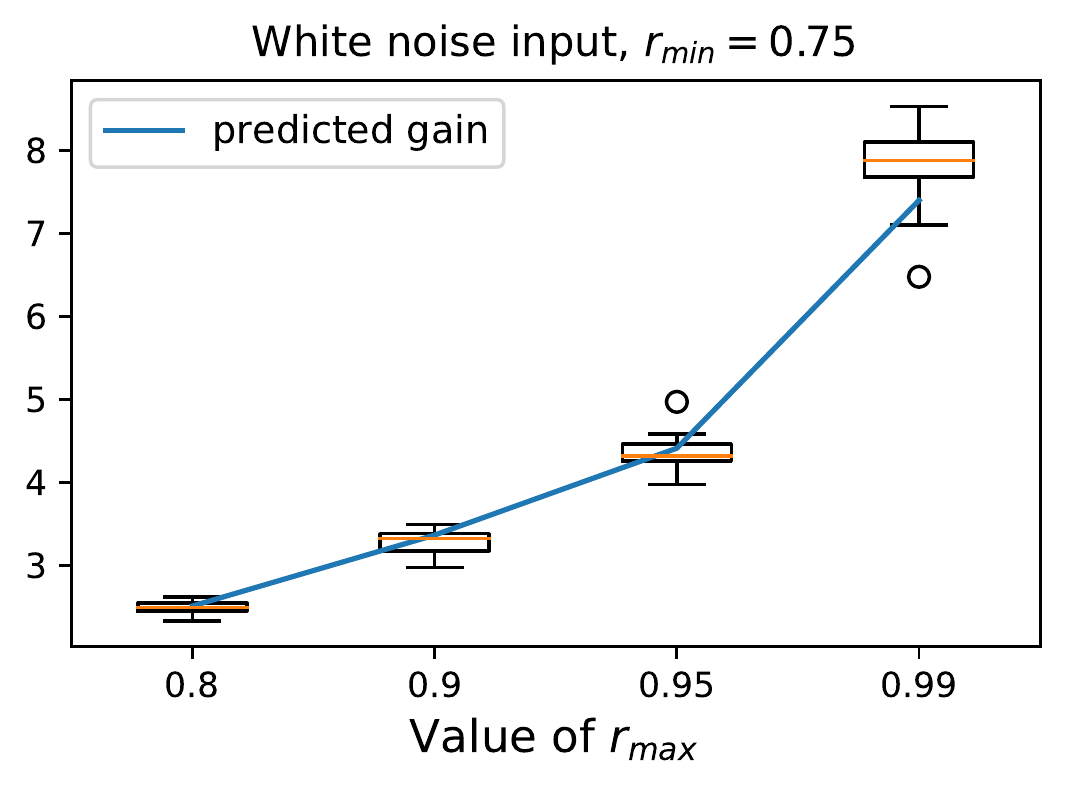}
    \caption{\textit{Numerical simulation for gain formula derived in Proposition~\ref{prop:r_min_max_effect}. Here we chose $N=500$, $L=10k$~(sequence length) and plotted statistics for 10 runs with boxplot indicating median and (5,95) percentile. Indicated in blue line is our prediction. The formula holds both for constant and random input, yet we notice that it is more accurate in the random input setting.}}
    \label{fig:verification_gain_prop}
\end{figure}

\blowup*

\begin{proof}
Assume first (most difficult case) that $u_k $ is constant, i.e. such that $B u_k =: \tilde u$ for all $k$. Then,
\begin{align}
    \|x_\infty\|_2^2 &= \sum_{n=1}^{\infty}\sum_{m=1}^{\infty} \tilde u_{k-m}^*\left(\Lambda^m\right)^* \Lambda^n \tilde u_{k-n}\\ &= \tilde u^*\left[\sum_{n=1}^{\infty}\sum_{m=1}^{\infty} \left(\Lambda^m\right)^* \Lambda^n\right] \tilde u.
\end{align}

Note that $\Lambda = \text{diag}(\lambda_1,\dots,\lambda_N)$ is diagonal with equally distributed entries on the disk between radii $r_{\min}$ and $r_{\max}$. One can then sample a generic entry $\lambda$ using the change of variables formula for probabilities~\citep{jeffreys1998theory} as follows~(see also Lemma~\ref{lemma:sampling_exp}):
\begin{equation}
    \lambda = r^{\frac{1}{2}}e^{i2\pi\theta}, \quad r\sim\mathcal{U}[r_{\min}^2,r_{\max}^2],\quad \theta\sim\mathcal{U}[0,1],
\end{equation}
Where crucially $r$ and $\theta$ are independent. Let $\mathbb{T}(r_{\min}, r_{\max}) = \{\lambda\in\Cmp: |\lambda|\in[r_{\min},r_{\max}]\}$. We need to study the following quantity:
\begin{align}
    \E_{\lambda\sim\mathbb{T}(r_{\min}, r_{\max})}\left[\sum_{n=1}^{\infty}\sum_{m=1}^{\infty} \lambda^n (\lambda^m)^*\right]
     &= \E_{r,\theta}\left[\sum_{n=1}^{\infty}\sum_{m=1}^{\infty} r^{\frac{1}{2}(n+m)} e^{i2\pi (n-m)\theta}\right]\\
     &= \sum_{n=1}^{\infty}\sum_{m=1}^{\infty} \E_{r}\left[r^{\frac{1}{2}(n+m)}\right] \E_{\theta}\left[e^{i2\pi (n-m)\theta}\right]
\end{align}
The expectation w.r.t $\theta$ is non-zero only if $n=m$, therefore
\begin{align}
    \E_{\lambda\sim\mathbb{T}(r_{\min}, r_{\max})}\left[\sum_{n=1}^{\infty}\sum_{m=1}^{\infty} \lambda^n (\lambda^m)^*\right]
     &= \sum_{n=1}^{\infty} \E_{r}\left[r^{n}\right]\\ &=\E_{r}\left[\sum_{n=1}^{\infty} r^{n}\right] \\
     &= \E_{r}\left[\frac{1}{1-r}\right]\\ &=\frac{1}{r_{\max}^2-r_{\min}^2}\int_{r_{\min}^2}^{r_{\max}^2} \frac{1}{1-r} dr\\
     &=\frac{1}{r_{\max}^2-r_{\min}^2}(-\log(|1-r_{\max}^2|)+\log(|1-r_{\min}^2|))\\
&=\frac{1}{r_{\max}^2-r_{\min}^2}\log\left(\frac{1-r_{\min}^2}{1-r_{\max}^2}\right).
 \end{align}

The \textit{white noise input} case is simpler. Let us start from $\|x_\infty\|_2^2 = \sum_{n=1}^{\infty}\sum_{m=1}^{\infty} \tilde u_{k-m}^*\left(A^m\right)^* A^n \tilde u_{k-n}$. Now, we can retrieve the single sum by the fact that $A$ is diagonal and $\E[\tilde u_{k-m}^* \tilde u_{k-n}]=0$ for $m\ne n$. The rest of the proof is identical, and presented in the main paper for the one-simensional setting.
\end{proof}

\end{document}